\newsavebox{\imagebox}
	\title{Generalization in portfolio-based algorithm selection}
\author{
	Maria-Florina Balcan \\ \small Carnegie Mellon University \\ \small \texttt{ninamf@cs.cmu.edu}
	\and 
	Tuomas Sandholm \\ \small Carnegie Mellon University \\
	\small Optimized Markets, Inc.\\
	\small Strategic Machine, Inc.\\
	\small Strategy Robot, Inc.\\
	\small \texttt{sandholm@cs.cmu.edu}
	\and 
	Ellen Vitercik \\ \small Carnegie Mellon University \\ \small \texttt{vitercik@cs.cmu.edu}}
\begin{document}
\maketitle

\begin{abstract}
	Portfolio-based algorithm selection has seen tremendous practical success over the past two decades. This algorithm configuration procedure works by first selecting a portfolio of diverse algorithm parameter settings, and then, on a given problem instance, using an \emph{algorithm selector} to choose a parameter setting from the portfolio with strong predicted performance. Oftentimes, both the portfolio and the algorithm selector are chosen using a \emph{training set} of typical problem instances from the application domain at hand. In this paper, we provide the first provable guarantees for portfolio-based algorithm selection. We analyze how large the training set should be to ensure that the resulting algorithm selector's average performance over the training set is close to its future (expected) performance. This involves analyzing three key reasons why these two quantities may diverge: 1) the learning-theoretic complexity of the algorithm selector, 2) the size of the portfolio, and 3) the learning-theoretic complexity of the algorithm's performance as a function of its parameters. We introduce an end-to-end learning-theoretic analysis of the portfolio construction and algorithm selection together. We prove that if the portfolio is large, overfitting is inevitable, even with an extremely simple algorithm selector. With experiments, we illustrate a tradeoff exposed by our theoretical analysis: as we increase the portfolio size, we can hope to include a well-suited parameter setting for every possible problem instance, but it becomes impossible to avoid overfitting. 
\end{abstract}

\section{Introduction}
Algorithms for many problems have tunable parameters. With a deft parameter tuning, these algorithms can often efficiently solve computationally challenging problems. However, the best parameter setting for one problem is rarely optimal for another. \emph{Algorithm portfolios}---which are finite sets of parameter settings---are used in practice to deal with this variability. A portfolio is often used in conjunction with an \emph{algorithm selector}, which is a function that determines which parameter setting in the portfolio to employ on any input problem instance.
Portfolio-based algorithm selection has seen tremendous empirical success, fueling breakthroughs in combinatorial auction winner determination~\citep{Leyton-Brown03:Resource,Sandholm13:Very-Large-Scale}, SAT~\citep{Xu08:Satzilla},  integer programming~\citep{Xu10:Hydra,Kadioglu10:ISAC}, planning~\citep{Cenamor16:Ibacop,Nunez15:Automatic}, and many other domains.

Both the portfolio and the algorithm selector are often chosen using a \emph{training set} of problem instances from the application domain at hand. This training set is typically assumed to be drawn from an unknown, application-specific distribution. The portfolio and algorithm selector are chosen to have strong average performance (quantified by low average runtime, for example) over the training set. We investigate whether the learned algorithm selector also has strong expected performance on problems from the same application domain. The difference between average performance and expected performance is known as \emph{generalization error}. If the generalization error is small, every parameter setting’s average performance over the training set is close to its expected performance, so the learned algorithm selector will not \emph{overfit}. When overfitting occurs, the learned selector has strong average performance over the training set but poor expected performance on the true distribution. In other words, the algorithm selector is overfitting to the problem instances in the training set.

There are multiple reasons the generalization error might be large in this setting: 1) the learning-theoretic complexity of the algorithm selector, 2) the size of the portfolio, and 3) the learning-theoretic complexity of the algorithm's performance as a function of its parameters. We provide end-to-end bounds on generalization error in terms of all three elements simultaneously. The variety of factors impacting generalization error differentiates this paper from prior research on generalization guarantees in algorithm configuration~\citep{Balcan19:How,Liu20:Performance,Gupta17:PAC,Balcan17:Learning,Balcan18:Learning,Balcan18:Dispersion,Balcan20:LearningToLink,Balcan20:Learning, Balcan20:Refined,Balcan18:Data,Garg18:Supervising}. That research focuses on bounding the generalization error of learning a \emph{single} good parameter setting for the entire problem instance distribution, rather than a portfolio together with an algorithm selector that selects an algorithm (e.g., its parameter values) from the portfolio \emph{for the specific instance at hand}. In the former case, generalization error only grows with (3)---just one of the sources of error we must contend with.

Our bounds apply to the widely-applicable setting where on any fixed input, algorithmic performance is a piecewise-constant function of its parameters with at most $t$ pieces, for some $t \in \Z$. This structure has been observed in algorithm configuration for integer programming, greedy algorithms, clustering, and computational biology~\citep{Gupta17:PAC,Balcan17:Learning,Balcan18:Learning,Balcan19:How,Balcan20:Data}. Given a training set of size $N$, we prove that the generalization error is bounded\footnote{Here we assume that algorithmic performance is a quantity in $[0,1]$, an assumption we relax in Section~\ref{sec:formulation}.} by $\tilde{O}\left(\sqrt{\left(\bar{d} + \kappa \log t\right)/N}\right),$ where $\kappa$ is the size of the portfolio and $\bar{d}$ measures the \emph{intrinsic complexity} of the algorithm selector, as we define in Section~\ref{sec:sample}. We also prove that this bound is tight up to logarithmic factors: the generalization error can be as large as $\tilde{\Omega}\left(\sqrt{\left(\bar{d} + \kappa\right)/N}\right).$ This implies that even if the algorithm selector is extremely simple ($\bar{d}$ is small), overfitting cannot be avoided in the worst case when the portfolio size $\kappa$ is large. Moreover, we instantiate our guarantees for several commonly-used families of algorithm selectors~\citep{Xu08:Satzilla,Kadioglu10:ISAC,Hutter14:Algorithm}. 

Finally, via experiments in the context of integer programming configuration, we illustrate the inherent tradeoff our theory exposes: as we increase the portfolio size, we can hope to include a high-performing parameter setting for any given instance, but it become increasingly difficult to avoid overfitting. We incrementally increase the size of the portfolio and with each addition we train an algorithm selector using regression forest performance models. As the portfolio size increases, the algorithm selector's training performance continues to improve, but there comes a point where the test performance begins to worsen, meaning that the algorithm selector is overfitting to the training set.

\paragraph{Additional related research.}
\citet{Gupta17:PAC} also provide generalization guarantees for algorithm configuration. They primarily analyze the problem of learning a single parameter setting with high expected performance on the underlying distribution. They do provide guarantees for the more general problem of learning a mapping from instances to parameter settings in a few special cases, but do not study the problem of learning a portfolio in conjunction with learning a selector, which we do. They study settings where for each problem instance, a domain expert has defined a number of relevant features, as do we in Section~\ref{sec:applications}. Their first result applies to learning an algorithm selector when the set of features is finite. In contrast, our results apply to infinite feature spaces. Their second set of results is tailored to the problem of learning empirical performance models and applies when the feature space is infinite. An empirical performance model is meant to predict how long a particular algorithm will take to run on a given input. An algorithm selector can use an empirical performance model by selecting the parameter setting with best predicted performance. \citet{Gupta17:PAC} provide guarantees that bound the difference between the empirical performance model's expected error and average error over the training set. Their guarantees can be applied once the portfolio is already chosen. They do not study the problem of learning the portfolio itself, whereas we study the composite problem of learning the portfolio and the algorithm selector.

In a related theoretical direction, several papers have studied a model where there are multiple algorithms capable of computing a correct solution to a given problem, but with different costs. The user can run multiple algorithms until one terminates with the correct solution. Given a training set of problem instances, the authors
provide guarantees for learning a schedule with high expected performance~\citep{Sayag06:Combining,Streeter07:Combining,Streeter09:Online}. That is a distinct problem from ours, since our goal is to learn an algorithm selector rather than a schedule. Moreover, we additionally handle the problem of learning the portfolio itself. 

\section{Problem formulation and road map}\label{sec:formulation}
\paragraph{Notation.} Our theoretical guarantees apply to algorithms
parameterized by a real value $\rho \in \R$.
We use the notation $\cZ$ to denote the set of problem instances the algorithm may take as input. For example, $\cZ$ might consist of integer programs (IPs) if we are configuring an IP solver. There is an unknown distribution $\dist$ over problem instances in $\cZ$.

To describe the performance of a parameterized algorithm, we adopt the notation of prior research~\citep{Balcan19:How}.
For every parameter setting $\rho \in \R$, there is a function $u_{\rho} : \cZ \to [0,H]$ that measures, abstractly, the performance of the algorithm parameterized by $\rho$ given an input $z \in \cZ$. For example, $u_{\rho}$ might measure runtime or the quality of the algorithm's output. We use the notation $\cU = \left\{u_{\rho} : \rho \in \R\right\}$ to denote the set of all performance functions. 
\paragraph{Problem formulation.}
A portfolio-based algorithm selection procedure relies on two key components: a \emph{portfolio} and an \emph{algorithm selector}. A portfolio is a set $\cP = \left\{\rho_1, \dots, \rho_{\kappa}\right\} \subseteq \R$ of $\kappa$ parameter settings. An algorithm selector is a mapping
$f: \cZ \to \cP$ from problem instances $z \in \cZ$ to parameter settings $f(z) \in \cP$.
In practice~\citep{Xu10:Hydra,Kadioglu10:ISAC,Sandholm13:Very-Large-Scale}, the portfolio  and algorithm selector are typically learned using the following high-level procedure:
\begin{enumerate}
	\item Choose a class $\cF$ of algorithm selectors, each of which maps $\cZ$ to $\R$. (In Section~\ref{sec:applications}, we provide several examples of classes $\cF$ used in practice.)
	\item Draw a training set $\sample = \left\{z_1, \dots, z_N\right\} \sim \dist^N$ of problem instances from the unknown distribution $\dist$.
	\item Use $\sample$ to learn a portfolio $\hat{\cP} = \left\{\rho_1, \dots, \rho_{\kappa}\right\} \subseteq \R$.
	\item Use $\sample$ to learn an algorithm selector $\hat{f} \in \cF$ that maps to parameter settings in the portfolio $\hat{\cP}$.
\end{enumerate}

Given an instance $z \in \cZ$, the performance of the parameter setting selected by $\hat{f}$ is $u_{\hat{f}(z)}(z)$.
 We bound the expected quality $\E_{z \sim \dist}\left[u_{\hat{f}(z)}(z)\right]$ of the learned algorithm selector.

\paragraph{Road map.}
We first analyze to what extent the average performance of the selector $\hat{f}$ over the training set generalizes to its expected performance on the distribution. 
We then use this analysis to relate the performance of the learned selector $\hat{f}$ and the optimal selector under the optimal choice of a portfolio. In particular, we bound the difference between  $\E_{z \sim \dist}\left[u_{\hat{f}(z)}(z)\right]$ and $\max_{\cP : |\cP| \leq \kappa}\E_{z \sim \dist}\left[\max_{\rho \in \cP} u_{\rho}(z) \right]$. (Equivalently, if our goal is to minimize $u_{\rho}(z)$, we may replace each $\max$ with a $\min$.) 

\section{Sample complexity bounds}\label{sec:sample}
In this section, we bound the difference between the average performance of any selector $f \in \cF$ over the training set $\sample \sim \dist^N$ and its expected performance. Formally, we bound \begin{equation}\left|\frac{1}{N} \sum_{z \in \sample} u_{f(z)}(z) - \E_{z \sim \dist}\left[u_{f(z)}(z)\right]\right|\label{eq:generalization}\end{equation} for any choice of an algorithm selector $f \in \cF$. This will serve as a building block for our general analysis of portfolio-based algorithm selection.

Our bounds apply in the widely-applicable setting where on any fixed input, algorithmic performance is a piecewise-constant function of the algorithm's parameters. This structure has been observed in algorithm configuration for integer programming, greedy algorithms, clustering, and computational biology~\citep{Gupta17:PAC,Balcan17:Learning,Balcan18:Learning,Balcan19:How,Balcan20:Data}. To describe this structure more formally, for a fixed input $z \in \cZ$, we use the notation $u_z^* : \R\to \R$ to denote algorithmic performance as a function of the parameters (whereas the functions $u_{\rho}$ defined in Section~\ref{sec:formulation} measure performance as a function of the input $z$). Naturally, $u_z^*(\rho) = u_{\rho}(z)$. We refer to $u_z^*$ as a \emph{dual function} (as opposed to $u_{\rho}$, which is a \emph{primal function}). We assume algorithmic performance is a piecewise-constant function of the parameters, or more formally, that each function $u_z^*$ is piecewise constant with at most $t$ pieces, for some $t \in \Z$.

Our bounds depend on both the number of pieces $t$ and on the \emph{intrinsic complexity} of the class of algorithm selectors $\cF$.
We use the following notion of the \emph{multi-class projection} of $\cF$ to define the class's intrinsic complexity.
\begin{definition}
	Given a selector $f \in \cF$, let $\rho_1 < \rho_2 < \cdots < \rho_{\bar{\kappa}}$ be the parameter settings $f$ maps to, with $\bar{\kappa} \leq \kappa$. The function $f$ defines a partition $Z_1, \dots, Z_{\bar{\kappa}}$ of the problem instances $\cZ$ where for any $z \in \cZ$, if $f(z)  = \rho_i$, then $z \in Z_i$. For each function $f \in \cF$ there is therefore a corresponding multi-class function $\bar{f} : \cZ \to [\kappa]$ that indicates which set of the partition the instance $z$ belongs to: $\bar{f}(z) = i$ when $z \in Z_i$. We use the notation $\bar{\cF} = \left\{ \bar{f} : f \in \cF\right\}$ to denote the set of all such multi-class functions.
\end{definition}
Defining this set of multi-class functions allows us to use classic tools from multi-class learning to reason about the algorithm selectors $\cF$.
In particular, our bounds depend on the \emph{\citet{Natarajan89:Learning} dimension} of the class $\bar{\cF}$, which is a natural extension of the classic VC dimension~\citep{Vapnik71:Uniform} to multi-class functions.
	\begin{definition}[Natarajan dimension]
		The set $\bar{\cF}$ \emph{multi-class shatters} a set of problem instances $z_1, \dots, z_N$ if there exist labels $y_1, \dots, y_N \in [\kappa]$ and $y_1', \dots, y_N' \in [\kappa]$ such that:
		\begin{enumerate}
			\item For every $i \in [N]$, $y_i \not= y_i'$, and
			\item For any subset $C \subseteq [N]$, there exists a function $\bar{f} \in \bar{\cF}$ such that $\bar{f}\left(z_i\right) = y_i$ if $i \in C$ and $\bar{f}(z_i) = y_i'$ otherwise.
		\end{enumerate}The \emph{Natarajan dimension} of $\bar{\cF}$ is the cardinality of the largest set that can be multi-class shattered by $\bar{\cF}$.
	\end{definition}
In Section~\ref{sec:applications}, we bound the Natarajan dimension of $\bar{\cF}$ for several commonly-used classes of algorithm selectors $\cF$.
We use Natarajan dimension to quantify the intrinsic complexity of the class of selectors, which in turn allows us to bound Equation~\eqref{eq:generalization} for every function $f \in \cF$. To do so, we relate the Natarajan dimension of $\bar{\cF}$ to the
 \emph{pseudo-dimension} of the function class $\cU_{\cF} = \left\{z \mapsto u_{f(z)}(z) : f \in \cF\right\}$. Every function in $\cU_{\cF}$ is defined by an algorithm selector $f \in \cF$. On input $z \in \cZ$, $u_{f(z)}(z)$ equals the utility of the algorithm parameterized by $f(z)$ on input $z$.
Pseudo-dimension~\citep{Haussler92:Decision} is a classic learning-theoretic tool for measuring the intrinsic complexity of a class of real-valued functions (whereas Natarajan dimension applies to multi-class functions). Both Natarjan dimension and pseudo-dimension are extensions of the classic VC dimension, so they bear some resemblance. Below, we define the pseudo-dimension of the class $\cU_{\cF}$.
\begin{definition}[Pseudo-dimension]
	The set $\cU_{\cF}$ \emph{shatters} a set of instances $z_1, \dots, z_N \in \cZ$ if there exist \emph{witnesses} $w_1, \dots, w_N \in \R$ such that for any subset $C \subseteq [N]$, there exists an algorithm selector $f \in \cF$ such that $u_{f\left(z_i\right)}\left(z_i\right) \leq w_i$ if $i \in C$ and $u_{f\left(z_i\right)}\left(z_i\right) > w_i$ otherwise. The \emph{pseudo-dimension} of $\cU_{\cF}$, denoted $\pdim\left(\cU_{\cF}\right)$, is the size of the largest set of instances that can be shattered by $\cU_{\cF}$.
\end{definition}

Classic learning-theoretic results allow us to provide generalization bounds once we calculate the  pseudo-dimension. For example~\citep{Haussler92:Decision}, with probability $1-\delta$ over the draw of the set $\left\{z_1, \dots, z_N\right\} \sim \dist^N$, for any selector $f \in \cF$, \begin{equation}
	\left|\frac{1}{N} \sum_{i = 1}^N u_{f(z_i)}(z_i)- \E_{z \sim \dist}\left[u_{f(z)}(z)\right]\right|
	= O\left(H\sqrt{\frac{1}{N} \left(\pdim \left(\cU_{\cF}\right) + \log \frac{1}{\delta}\right)}\right).\label{eq:haussler}
\end{equation}

We now prove a general bound on $\pdim \left(\cU_{\cF}\right) $, which allows us to bound Equation~\eqref{eq:generalization}. The proof is in Appendix~\ref{app:sample}.
\begin{restatable}{theorem}{nat}\label{thm:nat}
	Suppose each dual function $u_z^*$ is piecewise-constant with at most $t$ pieces.
	Let $\bar{d}$ be the Natarajan dimension of $\bar{\cF}$.
	Then $\pdim \left(\cU_{\cF}\right) = \tilde O\left(\bar{d} + \kappa\log t\right).$
\end{restatable}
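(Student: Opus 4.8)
The plan is to bound $\pdim\left(\cU_{\cF}\right)$ by a direct counting argument. I fix an arbitrary candidate shattered set $z_1, \dots, z_N \in \cZ$ together with witnesses $w_1, \dots, w_N \in \R$, and I bound the number of distinct patterns of the events ``$u_{f(z_i)}(z_i) \leq w_i$'' that are realizable as $f$ ranges over $\cF$. If this number is strictly less than $2^N$, the set cannot be shattered; hence any shatterable $N$ must satisfy the resulting inequality, and solving it for $N$ gives the bound on $\pdim\left(\cU_{\cF}\right)$.

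The first step exploits the piecewise-constant structure. Each dual function $u_{z_i}^*$ has at most $t-1$ breakpoints, so the $N$ dual functions jointly induce at most $N(t-1)+1 \leq Nt$ breakpoints, partitioning the parameter line $\R$ into at most $Nt$ intervals on each of which every $u_{z_i}^*$ is \emph{simultaneously} constant. Therefore, once we know, for each of the (at most $\kappa$) parameter settings $\rho_1 < \cdots < \rho_{\bar{\kappa}}$ in the portfolio of a given $f$, which interval it falls into, we know the value $u_{z_i}^*(\rho_j) = u_{\rho_j}(z_i)$ for every instance $i$ and every portfolio index $j$. I call such an assignment of the portfolio settings to intervals a \emph{configuration}, and since each of the $\leq \kappa$ settings lies in one of $\leq Nt$ intervals, there are at most $(Nt)^{\kappa}$ configurations.

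The second step isolates the remaining degree of freedom. Writing $u_{f(z_i)}(z_i) = u_{z_i}^*(\rho_{\bar{f}(z_i)})$, I observe that once a configuration is fixed, the entire pattern is determined by the multi-class labeling $(\bar{f}(z_1), \dots, \bar{f}(z_N)) \in [\kappa]^N$ induced by $\bar{f} \in \bar{\cF}$. By the Natarajan-dimension analogue of Sauer's lemma, a class over $\kappa$ labels with Natarajan dimension $\bar{d}$ realizes at most $(N \kappa^2)^{\bar{d}}$ distinct labelings on $N$ points. Multiplying the two counts, the total number of realizable patterns is at most $(Nt)^{\kappa} (N\kappa^2)^{\bar{d}}$; this over-counts (not every configuration--labeling pair need arise from an $f \in \cF$), but an upper bound is all I need.

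Finally, shattering forces $2^N \leq (Nt)^{\kappa}(N\kappa^2)^{\bar{d}}$. Taking logarithms yields $N \leq \kappa \log t + (\kappa + \bar{d}) \log N + 2 \bar{d} \log \kappa$, and solving this transcendental inequality (using that $N \leq A + B \log N$ forces $N = O(A + B \log B)$) gives $N = \tilde{O}\left(\bar{d} + \kappa \log t\right)$, as claimed. The main obstacle is the structural decomposition in the first two steps: the crux is recognizing that the \emph{shared} piecewise-constant structure collapses the infinitely many possible portfolios into the finitely many configurations counted above, so that the only residual combinatorial richness is that of the selection rule, which is controlled precisely by the Natarajan dimension of $\bar{\cF}$. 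The remaining care lies in invoking the correct Natarajan--Sauer bound with its $\kappa^{2\bar{d}}$ label factor and in tracking the logarithmic terms so that they collapse cleanly into the $\tilde{O}$.
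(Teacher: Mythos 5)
Your proposal is correct and follows essentially the same route as the paper's proof: partition $\R$ into at most $Nt$ intervals on which all dual functions are simultaneously constant, group the selectors by which intervals their portfolio settings occupy (your ``configurations'' are exactly the paper's classes $\cF_J$), observe that within a group the utility pattern is determined by the multi-class labeling $\bar{f}$, and close with Natarajan's lemma. The bookkeeping ($2^N \leq (Nt)^{\kappa}(N\kappa^2)^{\bar{d}}$ versus the paper's $2^N \leq \kappa(Nt)^{\kappa}N^{\bar{d}}\kappa^{2\bar{d}}$) differs only by immaterial constants.
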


At a high level, the $\tilde O\left(\bar{d}\right)$ term accounts for the intrinsic complexity of the algorithm selectors $\cF$. The $O\left(\kappa\log t\right)$ term accounts for the complexity of composing selectors $f$ with the performance functions $u_{\rho}$. In Theorem~\ref{thm:lb}, we prove this bound is tight up to logarithmic factors.

\begin{proof}[Proof sketch of Theorem~\ref{thm:nat}]
Let $z_1, \dots, z_N \in \cZ$ be an arbitrary set of problem instances. Since each dual function $u_{z_i}^*$ is piecewise-constant with at most $t$ pieces, there are $M \leq Nt$ intervals $I_1, \dots, I_M$ partitioning $\R$ where for any interval $I_j$ and any instance $z_i$, $u_{z_i}^*(\rho)$ is constant across all $\rho \in I_j$.
Given these intervals, we partition the algorithm selectors in $\cF$ into at most $M^{\kappa}$ sets so that within any one set, all selectors map to the same $\kappa$ (or fewer) intervals. Focusing on the selectors within one set $\cF_0$ of the partition, we prove that the number of ways the utility functions $u_f$ across $f \in \cF_0$ can labels the instances $z_1, \dots, z_N$ is upper bounded by the number of ways the multi-class projection functions $\bar{f}$ across $f \in \cF_0$ can label the instances. We can then use the Natarajan dimension of $\bar{\cF}$ to bound the number of ways the functions in $\cU_{\cF}$ label the instances $z_1, \dots, z_N$.
\end{proof}

Theorem~\ref{thm:nat} and Equation~\eqref{eq:haussler} imply that with probability $1-\delta$ over the draw $\sample \sim \dist^N$, for any selector $f \in \cF$, \begin{equation}
	\left|\frac{1}{N} \sum_{z \in \sample} u_{f(z)}(z) - \E_{z \sim \dist}\left[u_{f(z)}(z)\right]\right| = O\left(H\sqrt{\frac{1}{N}\left(\bar{d} + \kappa\log t + \log\frac{1}{\delta}\right)}\right).\label{eq:ub}\end{equation} This theorem quantifies a fundamental tradeoff: as the portfolio size increases, we can hope to obtain better and better empirical performance $\sum_{z \in \sample} u_{f(z)}(z)$ but the generalization error $\tilde O\left(H\sqrt{\left(\bar{d} + \kappa\right)/N}\right)$ will worsen.

We now prove that Theorem~\ref{thm:nat} is tight up to logarithmic factors. The following theorem illustrates that even if the class of algorithm selectors is extremely simple (in that the Natarajan dimension of $\bar{\cF}$ is 0), if the portfolio size (that is, the number $\kappa$ of parameters mapped to) is large, we cannot hope to avoid overfitting. The full proof is in Appendix~\ref{app:sample}.

\begin{restatable}{theorem}{lb}\label{thm:lb}
	For any $\kappa, \bar{d} \geq 2$, there is a class of functions $\cU = \left\{u_{\rho}: \rho \in \R\right\}$ and a class of selectors $\cF$ such that:
	\begin{enumerate}
		\item Each selector $f \in \cF$ maps to $\leq \kappa$ parameter settings.
		\item Each dual function $u_z^*$ is piecewise-constant with 1 discontinuity,
		\item The Natarajan dimension of $\bar{\cF}$ is at most $\bar{d}$, and
		\item The pseudo-dimension  of $\cU_{\cF}$ is $\Omega\left(\kappa + \bar{d}\right)$.
	\end{enumerate}
\end{restatable}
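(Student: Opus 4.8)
The plan is to exhibit a single construction by gluing together two independent gadgets: one that forces pseudo-dimension $\Omega(\kappa)$ while keeping the Natarajan dimension at $0$, and one that forces pseudo-dimension $\Omega(\bar d)$ while using only two parameter settings. Since $\kappa + \bar d = \Theta(\max\{\kappa, \bar d\})$, combining the gadgets on disjoint instance sets and taking the product selector class will give pseudo-dimension $\Omega(\kappa + \bar d)$, which is what we want. All dual functions will be $0/1$-valued step functions, so property~2 (one discontinuity) holds throughout and we may take $H = 1$.

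For the $\Omega(\kappa)$ gadget, the key observation is that the multi-class projection $\bar f$ records only the \emph{rank} of $f(z)$ within $f$'s image, not its actual value, and distinct selectors are free to map to distinct actual values. Concretely, I would take instances $z_1, \dots, z_\kappa$ with dual functions $u_{z_i}^*(\rho) = \mathbf 1[\rho \ge i]$, and for each subset $C \subseteq [\kappa]$ a selector $f_C$ that maps $z_i$ to $v_i = i - \tfrac12$ if $i \in C$ and to $v_i = i$ otherwise. The values $v_1 < v_2 < \cdots < v_\kappa$ are always increasing, so $f_C$ maps $z_i$ to its $i$-th smallest image value; hence $\bar f_C(z_i) = i$ for \emph{every} $C$, the class $\bar{\cF}$ is a single function, and its Natarajan dimension is $0$. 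On the other hand $u_{f_C(z_i)}(z_i) = \mathbf 1[v_i \ge i]$ equals $0$ exactly when $i \in C$, so with witnesses $w_i = \tfrac12$ the family $\cU_{\cF}$ realizes all $2^\kappa$ labelings and shatters $z_1, \dots, z_\kappa$. This decoupling of ``actual value'' from ``partition label'' is the crux of the $\Omega(\kappa)$ bound, and I expect it to be the main conceptual obstacle.

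For the $\Omega(\bar d)$ gadget I would use only two parameter settings $\rho^{(1)} < \rho^{(2)}$, instances $z_1', \dots, z_{\bar d}'$ with dual functions that step up strictly between $\rho^{(1)}$ and $\rho^{(2)}$ (so $u_{\rho^{(1)}}(z_j') = 0$ and $u_{\rho^{(2)}}(z_j') = 1$), and the selector class consisting of all $2^{\bar d}$ maps from $\{z_j'\}$ into $\{\rho^{(1)}, \rho^{(2)}\}$. Selecting $\rho^{(1)}$ versus $\rho^{(2)}$ flips the performance between $0$ and $1$, so again with witnesses $\tfrac12$ we shatter all $\bar d$ points, while the induced binary partition class has Natarajan (here, VC) dimension exactly $\bar d$.

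Finally I would combine the two gadgets on the disjoint instance set $\{z_i\} \cup \{z_j'\}$, letting each selector act as some $f_C$ on the first block and some binary map on the second, and placing the two gadgets' parameters in disjoint ranges so that the $\bar d$-gadget always occupies the top two ranks. This separation is what keeps the bookkeeping clean: the first block contributes fixed labels (so no shattering and no Natarajan-dimension contribution), while the second block contributes Natarajan dimension $\bar d$, giving property~3; the combined image has at most $\kappa$ distinct values once I run the $\kappa$-gadget with $\kappa - 2$ instances, meeting property~1 exactly; and because the selectors act independently on the two blocks, any target labeling of all $(\kappa - 2) + \bar d$ instances is realizable, yielding pseudo-dimension $\Omega(\kappa + \bar d)$ and property~4 (the additive constants are absorbed by $\Omega$, even in the edge case $\kappa = 2$). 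The remaining obstacle is purely mechanical: verifying that the rank interference between the two blocks does not inflate the Natarajan dimension, which the disjoint-range placement handles.
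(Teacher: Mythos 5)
Your proposal is correct and rests on the same two gadgets as the paper's own proof: your rank-decoupling construction (each $f_C$ maps $z_i$ to $i$ or $i-\tfrac12$, so the multi-class projection is a single function while $\cU_{\cF}$ shatters all $\kappa$ points) is exactly the paper's construction for the case $\kappa > \bar d$ up to rescaling, and your two-parameter binary-selector gadget is its construction for the case $\bar d \ge \kappa$. The only difference is packaging: the paper handles the two cases separately and concludes via $\max\{\kappa,\bar d\} = \Omega(\kappa+\bar d)$, whereas you glue the gadgets on disjoint instance sets and disjoint parameter ranges into one class --- a valid but not strictly necessary extra step, and your bookkeeping for it (image size at most $\kappa$ by using $\kappa-2$ first-block instances, and Natarajan dimension at most $\bar d$ because only the second block's projections are non-constant) checks out.
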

\begin{proof}[Proof sketch]
	Let $\cZ = (0,1]$. For each parameter setting $\rho \in \R$, define $u_{\rho}(z) = \textbf{1}_{\{z \leq \rho\}}$.
Let $\kappa, \bar{d} \geq 2$ be two arbitrary integers. We split this proof into two cases: $\bar{d} \geq \kappa$ and $\kappa > \bar{d}$. In both cases, we construct a class of selectors $\cF$ that satisfies the properties in the theorem statement and we prove that $\pdim\left(\cU_{\cF}\right) \geq \max\left\{\kappa, \bar{d}\right\} = \Omega\left(\kappa + \bar{d}\right)$. We sketch the proof of the case where $\kappa > \bar{d}$.

We begin by partitioning $\cZ = (0,1]$ into $\kappa$ intervals $Z_1, \dots, Z_{\kappa}$, where $Z_i = \left(\frac{i-1}{\kappa}, \frac{i}{\kappa}\right]$. For each set $C \subseteq [\kappa]$, we define an selector $f_{C} : \cZ \to \R$ as follows.
For any $z \in \cZ$, let $i$ be the index of the interval $z$ lies in, i.e., $z \in Z_i$. If $i \in C$, we map $f_C(z) = \frac{i}{\kappa}$ and if $i \not\in C$, we map $f_C(z) = \frac{i}{\kappa} - \frac{1}{2\kappa}$.
Let $\cF= \left\{f_{C} : C \subseteq [\kappa]\right\}$. The multi-class projection of $\bar{\cF}$ is extremely simple: its Natarjan dimension is $0$. Moreover, the set $\sample = \left\{\frac{1}{\kappa}, \frac{2}{\kappa}, \dots, \frac{\kappa - 1}{\kappa}, 1\right\}$ is shattered by $\cU_{\cF}$ because---at a high level---each selector $f_C$ maps each element $z \in \sample$ to a parameter just above $z$ or just below $z$, which allows the function class $\cU_{\cF}$ to shatter $\sample$.
\end{proof}

In the proof of Theorem~\ref{thm:lb}, each performance function $u_{\rho}$ maps to $\{0,1\}$, so we effectively prove a lower bound on the \emph{VC dimension} of $\cU_{\cF}$. Classic results from learning theory imply the generalization error of learning a selector $f \in \cF$ can therefore be as large as $\tilde{\Omega}\left(H\sqrt{\left(\bar{d} + \kappa\right)/N}\right),$ which matches Equation~\eqref{eq:ub} up to logarithmic factors.

\section{Application of theory to algorithm selectors}\label{sec:applications}
We now instantiate Theorem~\ref{thm:nat} for several commonly-used classes of algorithm selectors. In each of the case studies, there is a feature mapping $\phi: \cZ \to \R^m$ that assigns feature vectors $\phi(z) \in \R^m$ to problem instances $z \in \cZ$.

\subsection{Linear performance models}\label{sec:linear}
We begin by providing guarantees for algorithm selectors that use a linear performance model. These have been used extensively in computational research~\citep{Xu08:Satzilla,Xu10:Hydra}.
To define this type of  selector, let $\vec{\rho} = \left(\rho_1, \dots \rho_{\kappa}\right)$ be a set of $\kappa$ distinct parameter settings.
For each $i \in [\kappa]$, define a vector $\vec{w}_i \in \R^m$ and let \[W = \begin{pmatrix}\vline & \hdots & \vline\\
		\vec{w}_1 & \ddots & \vec{w}_{\kappa}\\
		\vline & \hdots & \vline\end{pmatrix}\] be a matrix containing all $\kappa$ weight vectors. The dot product $\vec{w}_i \cdot \phi(z)$ is meant to estimate the performance of the algorithm parameterized by $\rho_i$ on  instance $z$.
We define the algorithm selector $f_{\vec{\rho}, W}(z) = \rho_i$ where $i = \argmax_{j \in [\kappa]}\left\{\vec{w}_j\cdot \phi(z)\right\}$, which selects the parameter setting with best predicted performance. We define the class of algorithm selectors $\cF_L = \left\{f_{\vec{\rho}, W} : W \in \R^{m \times \kappa}, \vec{\rho} \in \R^{\kappa}\right\}$.
To define the class $\bar{\cF}_L$, for each matrix $W\in \R^{m \times \kappa}$, let $g_W : \cZ \to [\kappa]$ be a function where $g_W(z) = \argmax_{i \in [\kappa]}\left\{\vec{w}_i\cdot \phi(z)\right\}.$ By definition, $\bar{\cF}_L= \left\{g_W : W \in \R^{m \times \kappa}\right\}$, so $\bar{\cF}_L$ is the well-studied \emph{$m$-dimensional linear class} which has a Natarajan dimension of $O(m\kappa)$~\citep{Shalev14:Understanding}. This fact implies the following corollary.
	\begin{cor}
		Suppose the dual functions are piecewise-constant with at most $t$ pieces. The pseudo-dimension of $\cU_{\cF_L} = \left\{z \mapsto u_{f(z)} : f \in \cF_L\right\}$ is $O(\kappa m \log (\kappa m) + \kappa \log t)$.
	\end{cor}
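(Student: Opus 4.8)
The plan is to combine Theorem~\ref{thm:nat} with the stated Natarajan dimension bound for the linear class $\bar{\cF}_L$. The corollary follows almost immediately by substitution: Theorem~\ref{thm:nat} tells us that $\pdim\left(\cU_{\cF}\right) = \tilde O\left(\bar{d} + \kappa \log t\right)$ whenever each dual function is piecewise-constant with at most $t$ pieces and $\bar{d}$ is the Natarajan dimension of $\bar{\cF}$. The hypothesis on the dual functions is already assumed in the corollary's statement, so the only remaining input is the value of $\bar{d}$ for this particular family of selectors.

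First I would verify that the class $\bar{\cF}_L$ really is the $m$-dimensional linear multi-class classifier, which the excerpt has already established: each $g_W$ predicts $\argmax_{i \in [\kappa]}\left\{\vec{w}_i \cdot \phi(z)\right\}$, exactly the standard multi-class linear predictor over feature vectors in $\R^m$ with $\kappa$ classes. I would then cite the known fact~\citep{Shalev14:Understanding} that this class has Natarajan dimension $O(m\kappa)$, so $\bar{d} = O(m\kappa)$. Substituting $\bar{d} = O(m\kappa)$ into Theorem~\ref{thm:nat} gives
\[
\pdim\left(\cU_{\cF_L}\right) = \tilde O\left(m\kappa + \kappa \log t\right).
\]

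The remaining task is purely to unpack the $\tilde O$ notation into the precise logarithmic factor claimed in the statement, namely $O\left(\kappa m \log(\kappa m) + \kappa \log t\right)$. Here I would trace through the proof of Theorem~\ref{thm:nat} to confirm that the suppressed logarithmic factor multiplies the $\bar{d}$ term (giving $\bar{d}\log\bar{d} = O(m\kappa \log(m\kappa))$) rather than the $\kappa \log t$ term. This is the one place where care is needed: the $\tilde O$ in Theorem~\ref{thm:nat} hides a factor that, when $\bar{d} = \Theta(m\kappa)$, becomes $\log(m\kappa)$, and I want to make sure it attaches to the correct summand so that the final exponent and the argument of the logarithm match the corollary exactly.

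I expect the main obstacle to be nothing deep — this is essentially a substitution — but rather the bookkeeping of confirming that the $\tilde O$ notation resolves to precisely $\log(\kappa m)$ on the first term and leaves the $\kappa \log t$ term with no extra logarithmic factor. As long as the proof of Theorem~\ref{thm:nat} yields a bound of the form $O\left(\bar{d}\log\bar{d} + \kappa \log t\right)$ (or something that simplifies to it under $\bar{d} = O(m\kappa)$), the corollary follows by direct substitution and simplification.
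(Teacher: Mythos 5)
Your proposal is correct and matches the paper's own (implicit) argument exactly: the paper likewise derives the corollary by citing the $O(m\kappa)$ Natarajan dimension of the multi-class linear predictor class from \citet{Shalev14:Understanding} and plugging it into Theorem~\ref{thm:nat}. Your bookkeeping concern resolves as you expect, since the proof of Theorem~\ref{thm:nat} yields $O\left(\left(\kappa + \bar{d}\right)\log\left(\kappa + \bar{d}\right) + \kappa\log t\right)$, which with $\bar{d} = O(m\kappa)$ gives precisely $O\left(\kappa m \log(\kappa m) + \kappa\log t\right)$.
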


\subsection{Regression tree performance models}\label{sec:tree}
We now analyze algorithm selectors that use a regression tree as the performance model. These have proven powerful in computational research~\citep{Hutter14:Algorithm}. A regression tree $T$'s leaf nodes partition the feature space $\R^m$ into disjoint regions $R_1, \dots, R_\ell$. In each region $R_i$, a constant value $c_i$ is used to predict the algorithm's performance on instances in the region. The internal nodes of the tree define this partition: each performs an inequality test on some feature of the input. We use the notation $h_T(z)$ to denote tree $T$'s prediction of the algorithm's performance on instance $z$. Formally, $h_T(z)$ equals the constant value corresponding to the region of the tree's partition to which $\phi(z)$ belongs.

An algorithm selector can be defined using a regression tree performance model as follows. Let $\vec{\rho} = \left(\rho_1, \dots, \rho_{\kappa}\right)$ be a set of $\kappa$ distinct parameter settings. For each parameter setting $\rho_i$, let $T_i$ be a tree that is meant to predict the performance of the algorithm parameterized by $\rho_i$, and let $\vec{T} = \left(T_1, \dots, T_{\kappa}\right)$ be the set of all $\kappa$ trees. We define the algorithm selector $f_{\vec{\rho}, \vec{T}}(z) = \rho_i$ where $i = \argmax_{j \in [\kappa]}\left\{h_{T_j}(z)\right\}$. The class of algorithm selectors $\cF_R$ consists of all functions $f_{\vec{\rho}, \vec{T}}$ across all parameter vectors $\vec{\rho} \in \R^{\kappa}$ and all $\kappa$-tuples of regression trees $\vec{T} = \left(T_1, \dots, T_{\kappa}\right)$. The full proof of the following lemma is in Appendix~\ref{app:rt}.

\begin{restatable}{lemma}{rt}
Suppose we limit ourselves to building regression trees with at most $\ell$ leaves.	Then the Natarajan dimension of $\bar{\cF}_R$ is $O(\ell\kappa \log (\ell \kappa m))$.
	\end{restatable}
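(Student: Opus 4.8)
The plan is to bound the growth function of $\bar{\cF}_R$ on an arbitrary sample and then invert a Sauer-type inequality for the Natarajan dimension. Since the Natarajan dimension is invariant under a bijective relabeling of the classes $[\kappa]$, I may ignore the sorting of $\vec{\rho}$ and work directly with the class of multi-class functions $z \mapsto \argmax_{j \in [\kappa]} h_{T_j}(z)$. Fix any sample $z_1, \dots, z_N \in \cZ$ with feature vectors $\phi(z_1), \dots, \phi(z_N) \in \R^m$; the goal is to upper bound the number $\Pi(N)$ of distinct label vectors $\left(\bar{f}(z_1), \dots, \bar{f}(z_N)\right)$ realizable as $\vec{T} = (T_1, \dots, T_\kappa)$ ranges over all $\kappa$-tuples of regression trees with at most $\ell$ leaves.

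The key structural observation is that this label vector depends on $\vec{T}$ only through (i) which leaf of each tree each sample point falls into, and (ii) the relative order of the $\le \kappa\ell$ leaf values, since $\argmax_{j} h_{T_j}(z_i)$ is determined by comparing the $\kappa$ relevant leaf values at $z_i$. I would therefore count these two ingredients separately. For (i): a single internal node performs a threshold test $\phi(z)_k \le \theta$, which realizes at most $m(N+1)$ distinct binary labelings of the sample (one per feature and per gap between sorted coordinate values). A tree with $\le \ell$ leaves has $\le \ell-1$ internal nodes and at most $4^\ell$ topologies, so the number of distinct leaf-assignment functions induced by one tree is at most $(cmN)^\ell$ for an absolute constant $c$; across all $\kappa$ trees this is at most $(cmN)^{\ell\kappa}$. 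For (ii): holding the leaf assignments fixed, the argmax pattern is determined by the weak ordering induced on the $\le \kappa\ell$ leaf values, and the number of weak orderings is $2^{O(\kappa\ell\log(\kappa\ell))}$. Multiplying, $\Pi(N) \le (cmN)^{\ell\kappa}\cdot 2^{O(\kappa\ell\log(\kappa\ell))}$. The coupling that a single leaf value is shared among all sample points in that leaf can only reduce the number of realizable patterns, so treating the two ingredients as independent is a valid upper bound.

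Finally I would invert the bound. If $\bar{\cF}_R$ Natarajan-shatters $d$ points, then the $2^d$ functions witnessing the shattering (one per subset $C \subseteq [d]$) induce $2^d$ distinct label vectors on those points, since functions for distinct subsets disagree somewhere in the symmetric difference; hence $2^d \le \Pi(d)$. Taking logarithms gives $d \le \ell\kappa\log(cmd) + O(\kappa\ell\log(\kappa\ell)) \le \ell\kappa\log\!\left(c'' m\kappa\ell d\right)$ for a suitable constant, and the standard fact that $d \le A\log(Bd)$ implies $d = O(A\log(AB))$ then yields $d = O(\ell\kappa\log(\ell\kappa m))$, as claimed.

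I expect the main obstacle to be the clean separation in the key structural observation: arguing rigorously that the multi-class label factors through the pair (leaf assignments, order of leaf values), and that these can be counted independently as an upper bound despite the fact that leaf values are shared among co-located points. Once that factorization is justified, the two counting steps (threshold tests feeding the per-tree partition count, weak orderings feeding the argmax count) and the final logarithmic inversion are routine.
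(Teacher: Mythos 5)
Your proposal is correct and follows essentially the same route as the paper's proof: bound the number of realizable label vectors on a shattered set by factoring through (a) the leaf-assignment of each sample point in each of the $\kappa$ trees, counted via the $m(\ell-1)$-many threshold tests (the paper does this with a hyperplane-arrangement count in $\R^{\ell-1}$ per choice of feature indices, you do it per node; both give $(mN\ell)^{O(\ell)}$ per tree), and (b) the relative order of the at most $\kappa\ell$ leaf values, then inverting $2^d \le \Pi(d)$. The factorization you flag as the main obstacle is exactly the step the paper takes, and it is valid for the reason you give — the argmax label is a function of the pair (leaf assignments, ordering), so counting the two independently upper-bounds the number of label vectors.
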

\begin{proof}[Proof sketch]
	For each $\kappa$-tuple of regression trees $\vec{T} = \left(T_1, \dots, T_{\kappa}\right)$, let $g_{\vec{T}} : \cZ \to [\kappa]$ be a function where $g_{\vec{T}}(z) = \argmax_{i \in [\kappa]} \left\{h_{T_i}(z)\right\}$. By definition, the set $\bar{\cF}_R$ consists of the functions $g_{\vec{T}}$ across all $\kappa$-tuples of regression trees $\vec{T}$ with at most $\ell$ leaves.
	Let $z_1, \dots, z_N \in \cZ$ be a set of problem instances. Our goal is to bound the number of ways the functions $g_{\vec{T}}$ can label these instances. A single regression tree induces a partition of these $N$ problem instances defined by which leaf each instance is mapped to as we apply the tree's inequality tests. The key step in this proof is bounding the total number of partitions we can induce by varying the tree's inequality tests. We then generalize this intuition to bound the number of partitions $\kappa$ regression trees can induce as we vary all their parameters. Once the partition of each regression tree is fixed, the tree with the largest prediction for each problem instance depends on the relative ordering of the constants at the trees' leaves. There is a bounded number of possible relative orderings, and we aggregate all of these bounds to prove the lemma statement.
\end{proof}

	\begin{cor}
	Suppose the dual functions are piecewise-constant with at most $t$ pieces and we limit ourselves to building regression trees with at most $\ell$ leaves. Then $\pdim\left(\cU_{\cF_R}\right) = O(\ell\kappa \log (\ell \kappa m) + \kappa \log t)$.
\end{cor}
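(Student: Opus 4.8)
The plan is to obtain this corollary as an immediate composition of the preceding lemma with Theorem~\ref{thm:nat}: the lemma supplies a bound on the Natarajan dimension $\bar{d}$ of $\bar{\cF}_R$, and Theorem~\ref{thm:nat} converts any such bound, under the piecewise-constant dual assumption, into a pseudo-dimension bound on $\cU_{\cF}$. So the argument reduces to three moves---verify that $\cF_R$ satisfies the hypotheses of Theorem~\ref{thm:nat}, substitute the lemma's value of $\bar{d}$, and simplify the resulting expression. No new shattering argument is needed, since the two sources of complexity in the target bound (the $\ell\kappa\log(\ell\kappa m)$ term from the tree structure and the $\kappa\log t$ term from composing with the performance functions) have already been isolated by the lemma and the theorem respectively.

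First I would check the hypotheses. By construction every selector $f_{\vec{\rho}, \vec{T}} \in \cF_R$ returns one of the $\kappa$ parameter settings $\rho_1, \dots, \rho_\kappa$, so it maps to at most $\kappa$ parameters, as Theorem~\ref{thm:nat} requires. I would then confirm that the class $\bar{\cF}_R$ whose Natarajan dimension the lemma bounds is exactly the multi-class projection of $\cF_R$ in the sense of the definition preceding Theorem~\ref{thm:nat}: that projection records, for each instance $z$, the index of the selected parameter, which is precisely $g_{\vec{T}}(z) = \argmax_{i \in [\kappa]}\left\{h_{T_i}(z)\right\}$. Finally, the dual functions are assumed piecewise constant with at most $t$ pieces, which is the last hypothesis of the theorem. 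With all three in place, invoking Theorem~\ref{thm:nat} with $\bar{d} = O(\ell\kappa\log(\ell\kappa m))$ gives $\pdim\left(\cU_{\cF_R}\right) = \tilde{O}\left(\ell\kappa\log(\ell\kappa m) + \kappa\log t\right)$, from which the stated bound is read off.

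The single place requiring care---and the part I expect to be the main, if mild, obstacle---is reconciling the $\tilde{O}$ of Theorem~\ref{thm:nat} with the explicit $O$-expression claimed here. Unwinding that theorem, the pseudo-dimension is the largest $N$ for which an inequality of the form $2^N \leq (Nt)^{\kappa}\left(N\kappa^2\right)^{\bar{d}}$ can hold (the $(Nt)^\kappa$ factor counting the cells of the proof sketch and the second factor being the Natarajan growth function), whose solution carries logarithmic factors in $\bar{d}$ and $\kappa$. I would substitute $\bar{d} = O(\ell\kappa\log(\ell\kappa m))$ into that solution and verify that the dominant terms collapse into $O\left(\ell\kappa\log(\ell\kappa m) + \kappa\log t\right)$, with the residual lower-order logarithmic factors absorbed consistently with the paper's $\tilde{O}$ convention. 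This bookkeeping is exactly parallel to the way the linear-model corollary turns $\bar{d} = O(m\kappa)$ into $O\left(\kappa m\log(\kappa m) + \kappa\log t\right)$, so rather than re-deriving the solution of the inequality from scratch I would mirror that computation.
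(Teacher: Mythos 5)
Your proposal matches the paper's (implicit) derivation exactly: the corollary is stated without a separate proof, as the immediate composition of the regression-tree Natarajan-dimension lemma with Theorem~\ref{thm:nat}, which is precisely your three-step plan of verifying hypotheses, substituting $\bar{d} = O(\ell\kappa\log(\ell\kappa m))$, and simplifying. The bookkeeping concern you flag is real but harmless---plugging this $\bar{d}$ into the explicit bound $O\left(\left(\kappa+\bar{d}\right)\log\left(\kappa+\bar{d}\right) + \kappa\log t\right)$ from the proof of Theorem~\ref{thm:nat} leaves a residual logarithmic factor that the paper's statement silently absorbs, exactly as you anticipated.
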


This pseudo-dimension bound reflects the end-to-end nature of our analysis, since the guarantee bounds the generalization error of both selecting the portfolio and training the regression tree performance model. This is why the bound grows with both the size of the portfolio $(\kappa)$ and the complexity of the regression trees ($\ell$ and $m$).

\subsection{Clustering-based algorithm selectors}\label{sec:cluster}
We now provide guarantees for clustering-based algorithm selectors, which have also been used in computational research~\citep{Kadioglu10:ISAC}. 
This type of selector clusters the feature vectors $\phi(z_1), \dots, \phi(z_N) \in \R^m$ and chooses a good parameter setting for each cluster.
On a new instance $z$, the selector determines which cluster center is closest to $\phi(z)$ and runs the algorithm using the parameter setting assigned to that cluster.
More formally, let $\vec{\rho} = (\rho_1, \dots, \rho_{\kappa})$ be a set of parameter settings and let $\vec{x}_1, \dots, \vec{x}_{\kappa} \in \R^m$ be a set of vectors. We define the matrix \[X = \begin{pmatrix}\vline & \hdots & \vline\\
			\vec{x}_1 & \ddots & \vec{x}_{\kappa}\\
			\vline & \hdots & \vline\end{pmatrix},\] where each column $\vec{x}_i$ is meant to represent a cluster center.
We define the algorithm selector $f_{\vec{\rho}, X}(z)  = \rho_i$ where $i = \argmin_{j \in [\kappa]}\left\{\norm{\vec{x}_j - \phi(z)}_p\right\},$ for some $\ell_{p}$-norm with $p \geq 1$. The class of algorithm selectors is $\cF_C = \left\{f_{\vec{\rho}, X} : \vec{\rho}\in \R^{\kappa}, X \in \R^{m \times \kappa}\right\}.$ The full proof of the following lemma is in Appendix~\ref{app:cluster}.

\begin{restatable}{lemma}{cluster}\label{lem:cluster}
	For any $p \in [1, \infty)$, the Natarajan dimension of  $\bar{\cF}_C$ is $O(m\kappa \log (m\kappa p))$.
\end{restatable}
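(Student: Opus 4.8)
\textbf{Proof proposal for Lemma~\ref{lem:cluster}.}
The plan is to bound the Natarajan dimension of $\bar{\cF}_C$ by controlling the number of distinct labelings the class can realize on an arbitrary set of $N$ instances, and then to invoke the Natarajan analog of Sauer's lemma in the contrapositive direction. For $X \in \R^{m \times \kappa}$, write $g_X(z) = \argmin_{j \in [\kappa]} \norm{\vec{x}_j - \phi(z)}_p$, so that $\bar{\cF}_C = \{g_X : X \in \R^{m \times \kappa}\}$. The observation driving the argument is that if a set $\{z_1, \dots, z_N\}$ is multi-class shattered, then the $2^N$ subsets $C \subseteq [N]$ appearing in the shattering definition each force a distinct labeling of the instances: two subsets differing at an index $i$ assign $z_i$ the labels $y_i$ and $y_i' \neq y_i$ respectively. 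Hence it suffices to show that $\bar{\cF}_C$ realizes strictly fewer than $2^N$ labelings on every set of $N$ instances once $N$ exceeds $c\, m\kappa \log(m\kappa p)$ for a suitable constant $c$.

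To count labelings I would fix the feature vectors $\phi(z_1), \dots, \phi(z_N) \in \R^m$ and regard the label of each instance as a function of the parameter point $X \in \R^{m\kappa}$. The label $g_X(z_i)$ is determined by the ordering of the $\kappa$ distances $\norm{\vec{x}_1 - \phi(z_i)}_p, \dots, \norm{\vec{x}_\kappa - \phi(z_i)}_p$, equivalently by the signs of the $\binom{\kappa}{2}$ boundary functions $\norm{\vec{x}_j - \phi(z_i)}_p^p - \norm{\vec{x}_k - \phi(z_i)}_p^p$. There are at most $N\binom{\kappa}{2}$ such functions over all instances, and the number of distinct labelings is at most the number of connected regions into which these functions partition $\R^{m\kappa}$. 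The whole problem thus reduces to bounding this region count and forcing it below $2^N$.

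The main obstacle, and the source of the $\log p$ factor, is that each boundary function $\sum_{l=1}^m |x_{j,l} - \phi(z_i)_l|^p - \sum_{l=1}^m |x_{k,l} - \phi(z_i)_l|^p$ is not a polynomial, owing to the absolute values and the $p$-th power. I would dispatch the absolute values first, partitioning $\R^{m\kappa}$ by the $m\kappa N$ coordinate hyperplanes $x_{j,l} = \phi(z_i)_l$; a standard arrangement bound yields $O((m\kappa N)^{m\kappa})$ cells, and inside any one cell each term $|x_{j,l} - \phi(z_i)_l|$ is a fixed signed linear form, so each boundary function becomes a sum of $p$-th powers of linear forms. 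When $p$ is a positive integer these are degree-$p$ polynomials in $m\kappa$ variables, and Warren's theorem bounds the number of sub-cells they induce by $(pN\kappa/m)^{O(m\kappa)}$; multiplying by the cell count and taking logarithms leaves $N = O(m\kappa \log(m\kappa p))$ after the resulting $\log N$ term is absorbed.

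The step I expect to be hardest is extending this to non-integer $p \in [1,\infty)$, where the boundaries are no longer polynomial. Here I would argue that, restricted to a generic line $X(s) = X_0 + sV$, each boundary function becomes a sum of $O(m)$ translated power functions $(s-\beta)^p$ whose number of sign changes is bounded, with $p$ entering the complexity of this root count, and then feed this bound into a region-counting lemma to recover the same asymptotic number of regions as in the integer case. Pinning down this root/region bound for real $p$ is the crux of the argument and precisely the reason the final dimension depends on $p$; everything else is routine arrangement counting and the Sauer-type reduction described above.
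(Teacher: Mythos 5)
Your proposal follows essentially the same route as the paper's proof: a Sauer-type reduction from multi-class shattering to counting realizable labelings, a first partition of $\R^{m\kappa}$ by the $Nm\kappa$ coordinate hyperplanes $x_j[\ell] = \phi(z_i)[\ell]$ to dispatch the absolute values, and then a sign-pattern bound for the degree-$p$ boundary functions $\|\vec{x}_j-\phi(z_i)\|_p^p - \|\vec{x}_k-\phi(z_i)\|_p^p$, giving $2^N \le (\mathrm{poly}(N,m,\kappa,p))^{m\kappa}$ and hence $N = O(m\kappa\log(m\kappa p))$. If anything, you are more explicit than the paper about the subtlety for non-integer $p$, where these boundary functions are not literally polynomials; the paper simply invokes the polynomial sign-pattern bound of Anthony and Bartlett with degree $p$ at that step.
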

\begin{proof}[Proof sketch]
For each matrix $X$, let $g_X : \cZ \to [\kappa]$ be defined such that \[g_X(z) = \argmin_{i \in [\kappa]}\left\{\norm{\vec{x}_i - \phi(z)}^p_p\right\}.\]
	By definition, $\bar{\cF}_C= \left\{g_X : X \in \R^{m \times \kappa}\right\}$.
	Let $z_1, \dots, z_N \in \cZ$ be a set of problem instances. Our goal is to bound the number of ways the functions $g_{X}$ can label these instances as we vary $X \in \R^{m \times \kappa}$. We do so by analyzing, for each instance $z_i$, the boundaries in $\R^{m \times \kappa}$ where if we shift $X$ from one side of the boundary to the other, the column in $X$ closest to $\phi\left(z_i\right)$ changes. We show that these boundaries are defined by multi-dimensional polynomials. We bound the total number of regions these boundaries induce in $\R^{m \times \kappa}$, which implies a bound on the Natarajan dimension of $\bar{\cF}_C$.
\end{proof}

Lemma~\ref{lem:cluster} and Theorem~\ref{thm:nat} imply the following bound.

\begin{cor}\label{cor:cluster}
	Suppose the dual functions are piecewise-constant with at most $t$ pieces. Then $\pdim\left(\cU_{\cF_C}\right)= \tilde O\left(m\kappa + \kappa\log t\right)$.
\end{cor}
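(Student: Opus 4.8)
The plan is to derive the corollary as an immediate consequence of Lemma~\ref{lem:cluster} together with the general pseudo-dimension bound of Theorem~\ref{thm:nat}; between them these two results already contain all of the mathematical content, so the proof amounts to a substitution followed by simplification of the asymptotic notation. Theorem~\ref{thm:nat} bounds $\pdim(\cU_{\cF})$ for any selector class $\cF$ whose dual functions $u_z^*$ are piecewise-constant with at most $t$ pieces, expressing the bound in terms of the Natarajan dimension $\bar{d}$ of the multi-class projection $\bar{\cF}$. The clustering-based class $\cF_C$ is a class of exactly this form, its projection is $\bar{\cF}_C$, and the corollary's hypothesis is precisely the piecewise-constancy assumption that Theorem~\ref{thm:nat} requires.

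First I would verify that the hypotheses of Theorem~\ref{thm:nat} are satisfied for $\cF = \cF_C$: every selector $f_{\vec{\rho}, X}$ maps to at most $\kappa$ parameter settings by construction, and each dual function $u_z^*$ is piecewise-constant with at most $t$ pieces by assumption. Next I would invoke Lemma~\ref{lem:cluster} to set $\bar{d} = O(m\kappa \log(m\kappa p))$, the Natarajan dimension of $\bar{\cF}_C$. Plugging this value of $\bar{d}$ into Theorem~\ref{thm:nat} gives $\pdim(\cU_{\cF_C}) = \tilde O(m\kappa \log(m\kappa p) + \kappa \log t)$.

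The final step is to simplify. The factor $\log(m\kappa p)$ multiplying $m\kappa$ is polylogarithmic in the problem parameters, so it is swept into the $\tilde O(\cdot)$ notation (which suppresses logarithmic factors), yielding the claimed bound $\pdim(\cU_{\cF_C}) = \tilde O(m\kappa + \kappa \log t)$. There is no genuine obstacle here: the only point that warrants care is confirming that the $p$-dependence inherited from Lemma~\ref{lem:cluster} is purely logarithmic---it enters the Natarajan-dimension bound only through a single $\log p$ term---so that it may legitimately be hidden inside $\tilde O$ and need not appear in the corollary's statement.
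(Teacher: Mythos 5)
Your proposal is correct and follows exactly the route the paper takes: the corollary is stated as an immediate consequence of Lemma~\ref{lem:cluster} (which gives $\bar{d} = O(m\kappa\log(m\kappa p))$) substituted into Theorem~\ref{thm:nat}, with the logarithmic factors, including the $\log p$ dependence, absorbed into the $\tilde{O}(\cdot)$. Nothing is missing.
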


\section{Learning procedure with guarantees}\label{sec:procedure}
In this section, we use the results from the previous section to provide guarantees for the high-level learning procedure outlined in Section~\ref{sec:formulation}:
\begin{enumerate}
	\item Draw a training set of problem instances $\sample \sim \dist^N$.
	\item Use the training set $\sample$ to select a set of $\kappa$ or fewer parameter settings $\hat{\cP} \subseteq \R$.
	\item Use $\sample$ to learn an algorithm selector $\hat{f} \in \cF$ that maps problem instances $z \in \cZ$ to parameter settings $\hat{f}(z) \in \hat{\cP}$.
\end{enumerate}

Our guarantees depend on the quality of the portfolio $\hat{\cP}$ and selector $\hat{f}$, as formalized by the following definition.

\begin{definition}\label{def:opt}
	Given a training set $\sample \subseteq \cZ^N$ and parameters $\alpha \in (0,1]$, $\beta \in [0,1]$, and $\epsilon \in [0,1]$, we say the portfolio $\hat{\cP}$ and the algorithm selector $\hat{f}$ are $(\alpha, \beta, \epsilon)$-optimal if:
	\begin{enumerate}
		\item The portfolio $\hat{\cP}$ is nearly optimal over the training set in the sense that \[\frac{1}{N}\sum_{z \in \sample} \max_{\rho \in \hat{\cP}} u_{\rho}(z) \geq \alpha\max_{\cP \subset \R : |\cP| \leq \kappa}\frac{1}{N}\sum_{z \in \sample} \max_{\rho \in \cP} u_{\rho}(z) - \beta.\]
(The maximization means that performance is measured with respect to an oracle that selects an optimal algorithm parameter $\rho$ from the portfolio for each instance.)
		\item The algorithm selector $\hat{f}$ returns high-performing parameter settings from the set $\hat{\cP}$ in the sense that \begin{equation}\frac{1}{N}\sum_{z \in \sample}u_{\hat{f}(z)}(z) \geq \frac{1}{N}\sum_{z \in \sample} \max_{\rho \in \hat{\cP}} u_{\rho}(z) - \epsilon.\label{eq:selector}\end{equation}
	\end{enumerate}
\end{definition}
For example, when algorithmic performance as a function of the parameters is piecewise constant, there are only a finite number of meaningfully different parameter values to choose among, one per piece. Then, since $\sum_{z \in \sample} \max_{\rho \in \hat{\cP}} u_{\rho}(z)$ is a submodular function of the portfolio $\hat{\cP}$, we can use a greedy algorithm to select the portfolio $\hat{\cP}$, and we obtain $\alpha = 1-\frac{1}{e}$ and $\beta = 0$, as we prove in Appendix~\ref{app:submodular}. Alternatively, an integer programming technique could be used to select the optimal portfolio from the finite set of candidate parameter values, in which case we would obtain $\alpha = 1$ and $\beta = 0$. Moreover, the value $\epsilon$ can be calculated directly from the training set.

The following theorem bounds the difference between the expected performance of the chosen selector $\hat{f}$ and an oracle that selects an optimal selector and an optimal portfolio. The full proof is in Appendix~\ref{app:procedure}.
\begin{restatable}{theorem}{overallBound}\label{thm:overall}
	Suppose that each dual function $u_z^*$ is piecewise constant with at most $t$ pieces.
	Given a training set $\sample \subseteq \cZ$ of size $N$, suppose we learn an $(\alpha, \beta, \epsilon)$-optimal portfolio $\hat{\cP} \subset \R$ and algorithm selector $\hat{f}: \cZ \to \hat{\cP}$ in $\cF$. With probability $1 - \delta$ over the draw of the training set $\sample \sim \dist^N$, \[\E_{z \sim \dist}\left[u_{\hat{f}(z)}(z)\right]
		\geq \alpha\max_{\cP : |\cP| \leq \kappa}\E\left[\max_{\rho \in \cP} u_{\rho}(z) \right] - \epsilon - \beta - \tilde O \left(H\sqrt{\frac{\bar{d} + \kappa}{N}}\right),\] where $\bar{d}$ is the Natarajan dimension of $\bar{\cF}$.
\end{restatable}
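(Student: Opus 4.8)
The plan is to chain inequalities that pass from the expected performance of $\hat f$ down to empirical quantities and back up to the benchmark, shedding one error term at each link. The first link handles the learned selector. Since $\hat f \in \cF$ is chosen from the data, I would invoke the uniform-convergence guarantee of Equation~\eqref{eq:ub} (which rests on Theorem~\ref{thm:nat}) simultaneously over all $f \in \cF$; applied to $\hat f$, this gives, on an event of probability at least $1 - \delta/2$,
\[
	\E_{z\sim\dist}\!\left[u_{\hat f(z)}(z)\right] \ge \frac{1}{N}\sum_{z\in\sample} u_{\hat f(z)}(z) - \tilde O\!\left(H\sqrt{\tfrac{\bar d+\kappa}{N}}\right).
\]
Crucially, because $\cF$ already ranges over every selector whose image---and hence induced portfolio---has size at most $\kappa$, this single step absorbs the learning-theoretic complexity of both the selector and the learned portfolio $\hat\cP$.

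Next I would apply the two defining properties of $(\alpha,\beta,\epsilon)$-optimality from Definition~\ref{def:opt}, both of which are purely empirical statements requiring no probabilistic argument: property 2 trades the empirical selector performance for the empirical oracle performance over $\hat\cP$ at a cost of $\epsilon$, and property 1 trades the latter for $\alpha\max_{\cP:|\cP|\le\kappa}\frac1N\sum_{z\in\sample}\max_{\rho\in\cP}u_\rho(z)$ at a cost of $\beta$. To return to the distribution, fix a portfolio $\cP^*$ of size at most $\kappa$ whose expected oracle value $\E_z[\max_{\rho\in\cP^*}u_\rho(z)]$ is within an arbitrarily small $\eta$ of the benchmark (a near-maximizer, which sidesteps the question of whether the supremum over a continuum of portfolios is attained). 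Because $\cP^*$ depends only on $\dist$ and not on $\sample$, the map $z\mapsto\max_{\rho\in\cP^*}u_\rho(z)$ is a single fixed $[0,H]$-valued function, so Hoeffding's inequality alone shows that its empirical mean exceeds its expectation by at most $\tilde O(H\sqrt{1/N})$ on an event of probability $1-\delta/2$. Since $\cP^*$ is one feasible portfolio, its empirical mean lower-bounds the empirical maximum over portfolios; chaining through the $\alpha$ factor (using $\alpha\le1$ to fold $\alpha\cdot\tilde O(\cdot)$ into $\tilde O(\cdot)$) and letting $\eta\to0$ recovers $\alpha$ times the benchmark. A union bound over the two events of the preceding steps, together with the observation that the $\tilde O(H\sqrt{1/N})$ Hoeffding term is dominated by the $\tilde O(H\sqrt{(\bar d+\kappa)/N})$ term, completes the argument.

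The main obstacle, and the key insight, lies in the final link. It is tempting to believe one needs uniform convergence over all portfolios of size $\kappa$, which would reintroduce the complexity of the portfolio class and risk inflating the error term. The point to get right is that the benchmark portfolio $\cP^*$ is \emph{data-independent}, so a single-function concentration bound suffices there, whereas every data-dependent object---the learned selector $\hat f$ and its portfolio $\hat\cP$---is controlled either by the uniform bound of the first step or by the purely empirical inequalities of Definition~\ref{def:opt}. Careful bookkeeping of which quantities are random, and verifying that the $\alpha$-scaling does not corrupt the error term, are the only delicate points.
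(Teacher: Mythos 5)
Your proposal is correct and follows essentially the same argument as the paper's proof, just traversing the chain in the opposite direction: the paper starts from $\E_{z\sim\dist}[\max_{\rho\in\cP^*}u_\rho(z)]$ and works via Hoeffding, Definition~\ref{def:opt}, and Theorem~\ref{thm:nat} down to $\E_{z\sim\dist}[u_{\hat f(z)}(z)]$, while you run the same three links in reverse (and your near-maximizer device and the remark that the one-sided Hoeffding bound you state should be oriented the other way are immaterial, since Hoeffding is two-sided).
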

\begin{proof}[Proof sketch]
First, let $\cP^*$ be the optimal portfolio in the sense that \[\cP^* = \argmax_{\cP \subset \R : |\cP| \leq \kappa}\E_{z \sim \dist}\left[\max_{\rho \in \cP} u_{\rho}(z) \right].\] We use a Hoeffding bound to relate the expected performance of $\cP^*$ under the oracle algorithm selector and its average performance over the training set. We then use Definition~\ref{def:opt} to relate the latter quantity to the average performance of the learned selector $\hat{f}$ over the training set. Finally, we use Theorem~\ref{thm:nat} to relate the average performance of $\hat{f}$ to its expected performance. Putting all of these bounds together, we prove the theorem statement.
\end{proof}

By a parallel argument, we can obtain symmetric guarantees when our goal is to minimize rather than maximize a performance measure.

\section{Experiments}
We provide experiments that illustrate the tradeoff we investigated from a  theoretical perspective in the previous sections: as we increase the portfolio size, we can hope to include a well-suited parameter setting for any problem instance, but it becomes increasingly difficult to avoid overfitting.
We illustrate this in the context of integer programming algorithm configuration. We configure CPLEX, one of the most widely used commercial solvers. CPLEX uses the \emph{branch-and-cut (B\&C)} algorithm (branch-and-bound with cutting planes, primal heuristics, preprocessing, etc.) to solve integer programs (IPs). We tune an important parameter $\rho \in [0,1]$ of CPLEX that controls its \emph{variable selection policy}\footnote{We override the default variable selection of CPLEX 12.8.0.0 using the C API. All experiments were run on a
	64-core machine with 512 GB of RAM, a m4.16xlarge Amazon AWS instance, and a cluster of m4.xlarge Amazon AWS instances.} and has been studied extensively in prior research~\citep{Gauthier77:Experiments,Benichou71:Experiments,Beale79:Branch,Linderoth99:Computational,Achterberg09:SCIP, Balcan18:Learning}. We leave CPLEX's other techniques on and unchanged in order to compare against the state of the art. We provide a more detailed overview of CPLEX and the parameter we tune in Appendix~\ref{app:experiments}. At a high level, B\&C partitions the IP's feasible region, finding locally optimal solutions within the regions of the partition, and eventually verifies that the best solution found so far is globally optimal. It organizes this partition as a tree. 
As in prior research~\citep{Balcan18:Learning,Gupta20:Hybrid,Zarpellon20:Parameterizing}, our goal is to find parameter settings leading to small trees, so we define $u_{\rho}(z)$ to be the size of the tree B\&C builds. We aim to learn a portfolio $\hat{\cP}$ and  selector $\hat{f}$ resulting in small expected tree size $\E\left[u_{\hat{f}(z)}(z)\right]$.

\paragraph{Distribution over IPs.}
We analyze a distribution over IPs formulating the combinatorial auction winner determination problem under the OR-bidding language~\citep{Sandholm02:Algorithm}, which we generate using the Combinatorial
Auction Test Suite~\citep{Leyton00:Toward}.
We use the ``arbitrary'' generator with 200 bids and 100 goods, resulting in IPs with around 200 variables, and the ``regions'' generator with 400
bids and 200 goods, resulting in IPs with around 400 variables. We define a heterogeneous distribution $\dist$ as follows: with equal probability, we draw an instance from the ``arbitrary'' or ``regions'' distribution. To assign features to these IPs, we use all the features developed in prior research by~\citet{Leyton00:Toward} and \citet{Hutter14:Algorithm}, resulting in 140 features.

\paragraph{Experimental procedure.} We first learn a portfolio of size 10 in the following way. We draw a training set of $M = 1000$ IPs $z_1, \dots, z_M \sim \dist$ and solve for the dual functions $u_{z_1}^*, \dots, u_{z_M}^*$---which measure tree size as a function of the parameter $\rho$---using the algorithm described in Appendix D.1 of the paper by \citet{Balcan18:Learning}. These functions are piecewise-constant with at most $t$ pieces, for some $t \in \N$. Therefore, there are at most $Mt$ parameter settings leading to different algorithmic performance over the training set. Let $\bar{\cP}$ be this set of parameter settings. We use a greedy algorithm to select 10 parameter settings from $\bar{\cP}$. First, we find a parameter setting $\rho_1$ which minimizes average tree size over the training set: $\rho_1 \in \argmin \sum_{i = 1}^M u_{z_i^*}(\rho)$. Then, we find a parameter setting $\rho_2$ that minimizes average tree size when the better of $\rho_1$ or $\rho_2$ is used: $\rho_2 \in \argmin \sum_{i = 1}^M \min\left\{u_{z_i^*}(\rho),u_{z_i^*}\left(\rho_1\right)\right\}$. We continue greedily until we have a portfolio $\hat{\cP} = \left\{\rho_1, \dots, \rho_{10}\right\}$.

We then use a regression forest to select among parameter settings in the portfolio $\hat{\cP}$. Prior research~\citep{Hutter14:Algorithm} has illustrated that regression forests can be strong predictors of B\&C runtime. Here, we use them to predict B\&C tree size. A regression forest is a set $F = \left\{T_1, \dots, T_M\right\}$ of regression trees (which we reviewed in Section~\ref{sec:tree}). On an input IP $z$, the regression forest's prediction, denoted $h_{F}(z)$, is the average of the trees' predictions: $h_F(z) = \frac{1}{M} \sum_{i = 1}^M h_{T_i}(z)$.
We learn regression forests $F_1, \dots, F_{10}$ for each of the 10 parameter settings in the portfolio $\hat{\cP}$. We then define the algorithm selector $\hat{f}(z) = \rho_i$ where $i = \argmin \left\{h_{F_1}(z), \dots, h_{F_{10}}(z)\right\}$.

To learn the regression forest, we draw a training set $z_1, \dots, z_N \sim \dist$ of IPs (with $N$ specified below). For each parameter setting $\rho_i \in \hat{\cP}$ and IP $z_j$, we compute $u_{\rho_i}\left(z_j\right)$, the size of the tree B\&C builds using the parameter setting $\rho_i$. We then train the regression forest $F_i$ corresponding to the parameter setting $\rho_i$ using the labeled training set $\left\{\left(z_1, u_{z_1}\left(\rho_i\right)\right), \dots, \left(z_N, u_{z_N}\left(\rho_i\right)\right)\right\}$.
 We use Python's scikit-learn regression forest implementation~\citep{Pedregosa11:Scikit} with the default parameter settings.

  \savebox{\imagebox}{\includegraphics[width=0.3\textwidth]{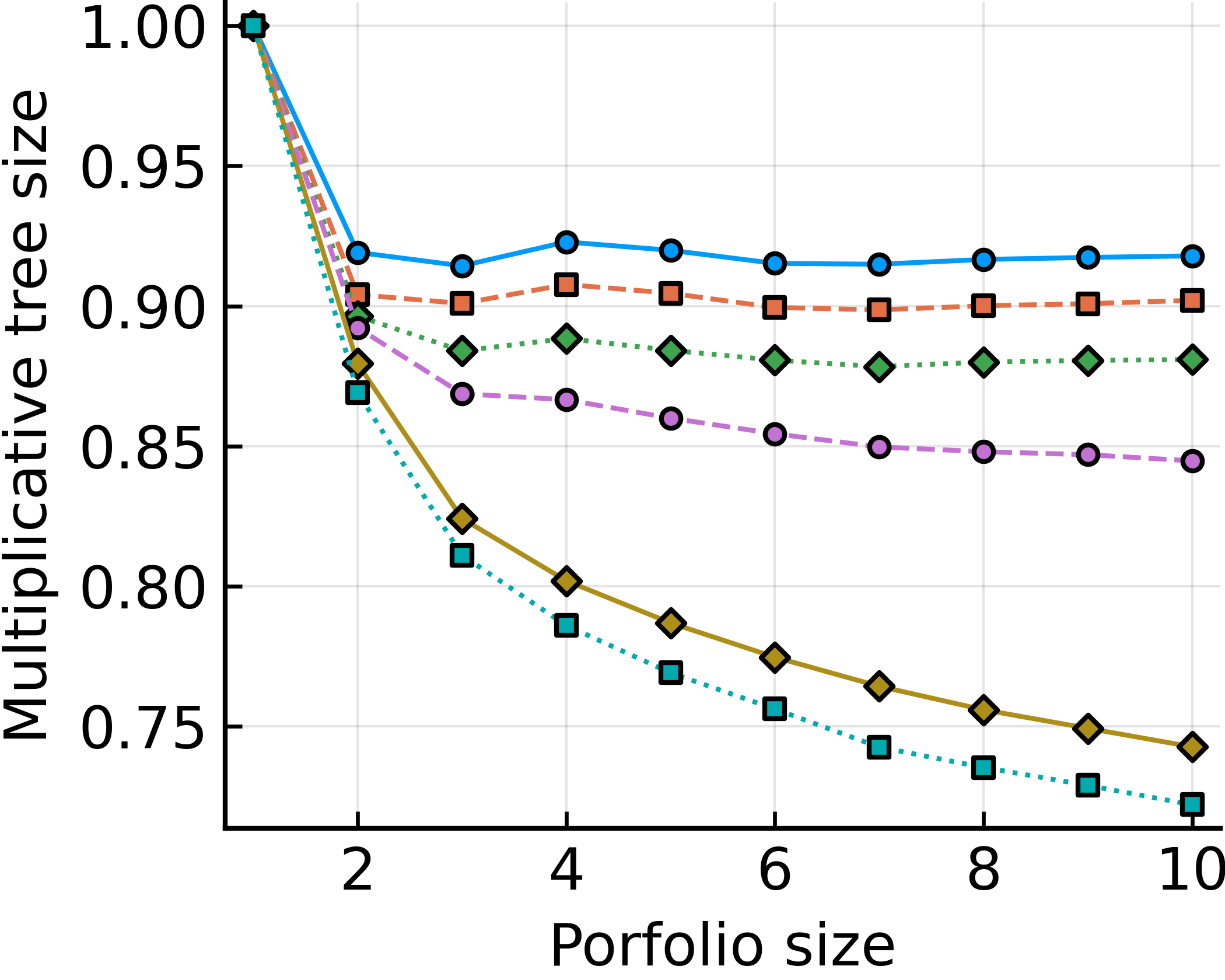}}%
\begin{figure}
	\centering
	\begin{subfigure}[t]{0.3\textwidth}
		\centering\usebox{\imagebox}
		\caption{Plot with portfolio sizes 1 through 10.}\label{fig:small}
	\end{subfigure}\qquad
	\begin{subfigure}[t]{0.28\textwidth}
		\centering\raisebox{\dimexpr.5\ht\imagebox-.5\height}{
			\includegraphics[width=\textwidth]{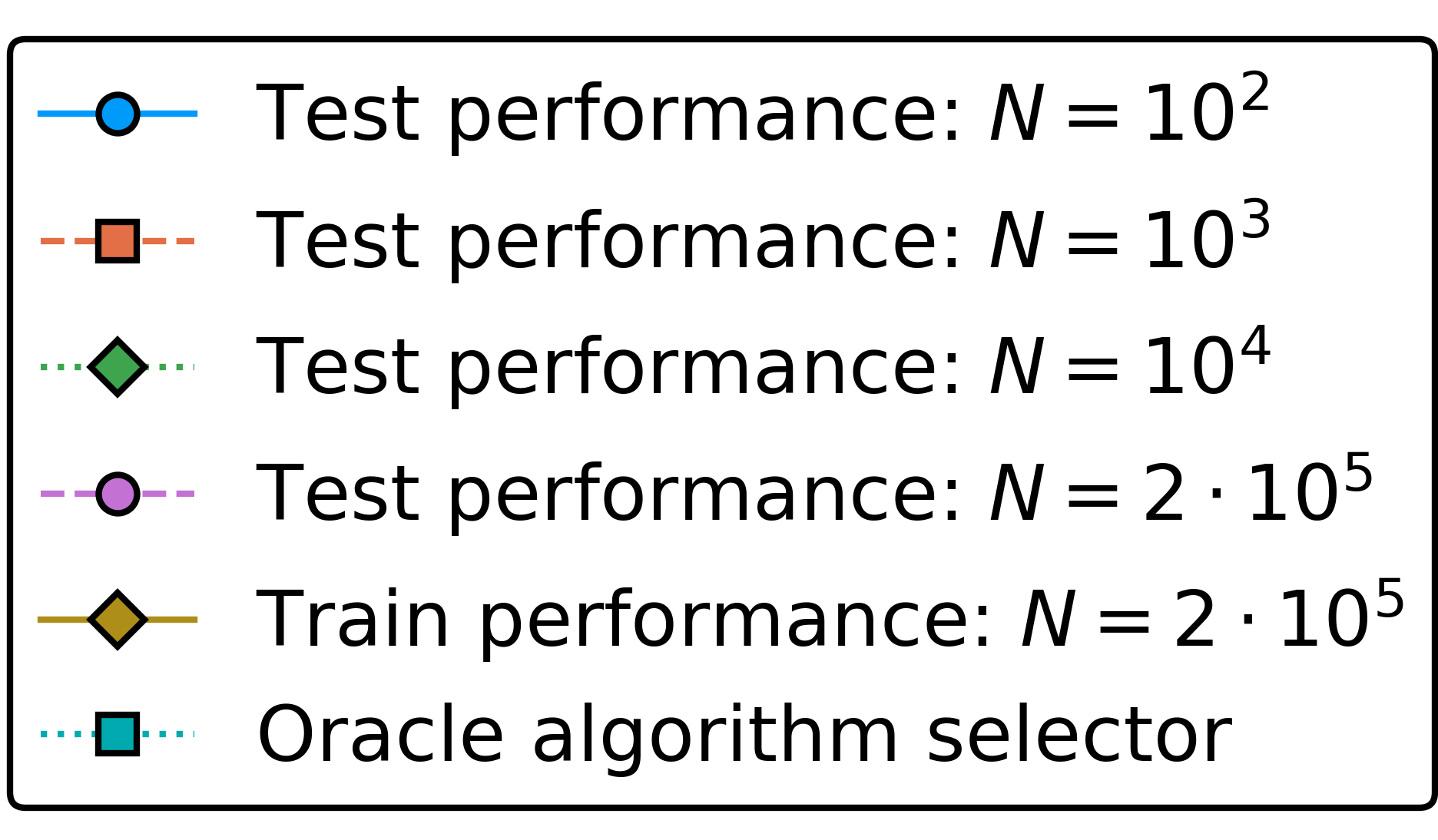}}
		\caption{Legend for Figures~\ref{fig:small} and \ref{fig:big}.}
	\end{subfigure}\qquad
	\begin{subfigure}[t]{0.3\textwidth}
		\includegraphics[width=\textwidth]{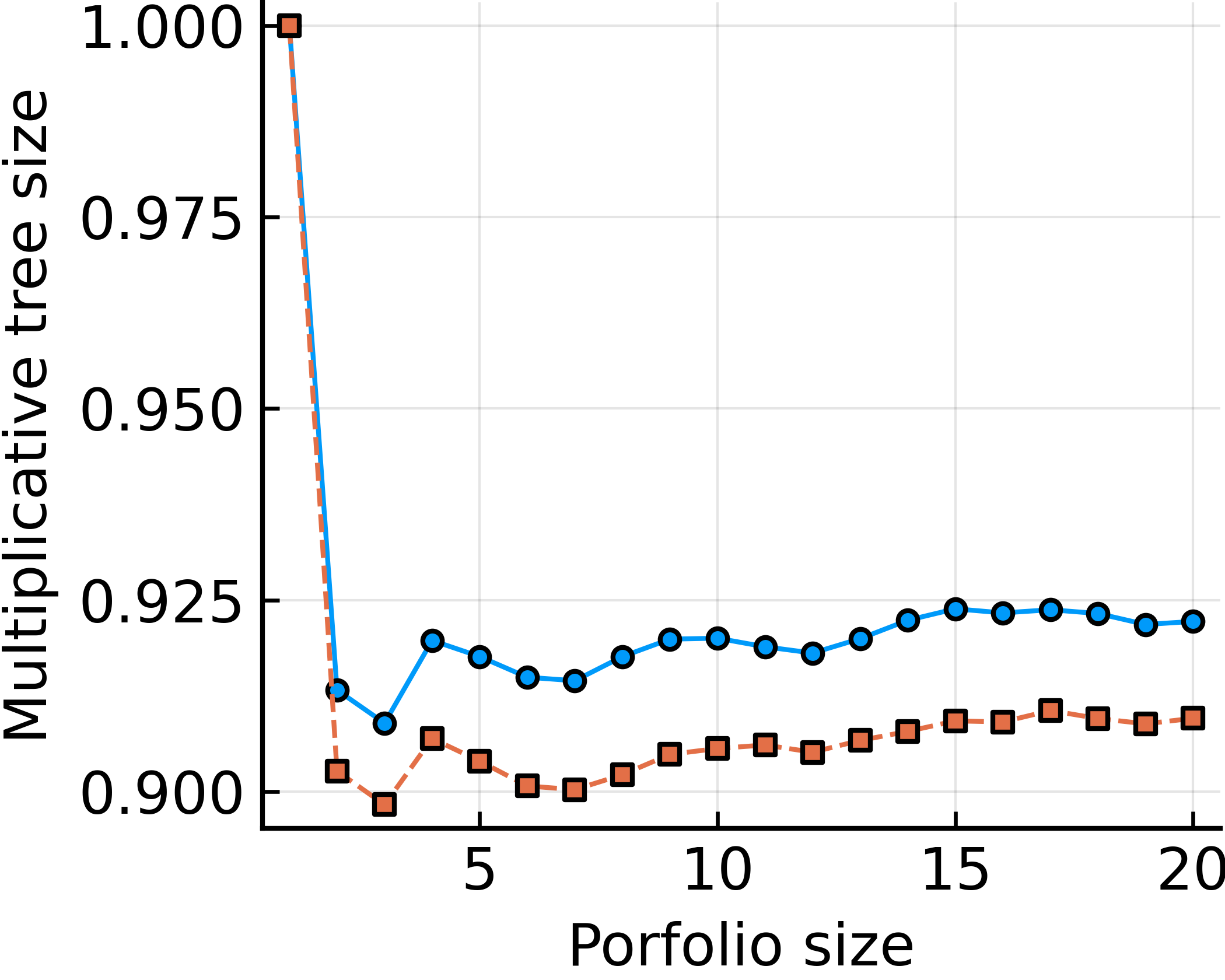}
		\caption{Plot with portfolio sizes 1 through 20.}\label{fig:big}
	\end{subfigure}
	\caption{In Figures~\ref{fig:small} and \ref{fig:big}, we plot the multiplicative tree size improvement we obtain as we increase both the portfolio size along the horizontal axis and the size of the training set, denoted $N$. Fixing a training set size and letting $\hat{v}_{\kappa}$ be the average tree size we obtain over the test set using a portfolio of size $\kappa$ (see Equation~\eqref{eq:test}), we plot $\hat{v}_{\kappa}/\hat{v}_1$. In Figure~\ref{fig:small}, the portfolio size ranges from 1 to 10 and the training set size $N$ ranges from 100 to 200,000. In Figure~\ref{fig:big}, the portfolio size ranges from 1 to 20 and the training set size ranges from 100 to 1000. In Figure~\ref{fig:small}, we also plot a similar curve for the test performance of the oracle algorithm selector, as well as the training performance of the learned algorithm selector when $N = 2 \cdot 10^5$.}\label{fig:train_v_test}
\end{figure}

In Figure~\ref{fig:small}, we plot the performance of the regression forests as we increase the sizes of both the training set and the portfolio. We denote the training set size as $N$, which ranges from 100 to 200,000.
 For a given choice of   $N$, we first train the 10 regression forests $F_1, \dots, F_{10}$ using the method described above. We then evaluate performance as a function of the portfolio size. Specifically, for each portfolio size $\kappa \in [10]$, we define an algorithm selector $\hat{f}_{\kappa}(z) = \rho_i$ where $i = \argmin \left\{h_{F_1}(z), \dots, h_{F_{\kappa}}(z)\right\}$. We draw $N_t = 10^4$ test instances $\sample_t \sim \dist^{N_t}$ and evaluate the performance of  $\hat{f}_{\kappa}$ on the test set. We denote the average test performance as \begin{equation}\hat{v}_{\kappa} = \frac{1}{N_t} \sum_{z \in \sample_t} u_{\hat{f}_{\kappa}\left(z\right)}\left(z\right).\label{eq:test}\end{equation} In Figure~\ref{fig:small}, we plot the multiplicative performance improvement we obtain as we increase $\kappa$. Specifically, we plot $\hat{v}_{\kappa} / \hat{v}_1$. These are the blue solid $(N = 10^2)$, orange dashed $(N = 10^3)$, green dotted $(N = 10^4)$, and purple dashed $(N = 2 \cdot 10^5)$ lines.  By the iterative fashion we constructed the portfolio, $\hat{v}_1$ is the performance of the best single parameter setting for the particular distribution, so $\hat{v}_1$ is already highly optimized.

We plot a similar curve for the test performance of the oracle algorithm selector which always selects the optimal parameter setting from the portfolio. Specifically, for each portfolio size $\kappa \in [10]$, let $f^*_{\kappa}$ be the oracle algorithm selector $f_{\kappa}^*(z) = \argmin_{\rho_1, \dots, \rho_{\kappa}} u_{\rho_i}(z)$. Given a test set $\sample_t \sim \dist^{N_t}$, we define the average test performance of $f^*_{\kappa}$ as \[v_{\kappa}^* = \frac{1}{N_t} \sum_{z \in \sample_t} u_{f^*_{\kappa}\left(z\right)}\left(z\right).\] The blue dotted line equals $v_{\kappa}^*/v_1^*$ as a function of $\kappa$.

Finally, when the training set is of size $N = 2 \cdot 10^5$, we provide a similar curve for the training performance of the learned algorithm selectors $\hat{f}_{\kappa}$. Letting $z_1, \dots, z_N$ be the training set, we denote the average training performance as \[\tilde{v}_{\kappa} = \frac{1}{N} \sum_{i = 1}^{N} u_{f^*_{\kappa}\left(z_i\right)}\left(z_i\right).\]  The yellow solid line equals $\tilde{v}_{\kappa} / \tilde{v}_1$ as a function of the portfolio size $\kappa$.

In Figure~\ref{fig:big}, we plot $\hat{v}_{\kappa} / \hat{v}_1$ as a function of the portfolio size $\kappa$ for larger portfolio sizes ranging from 1 to 20. We greedily extend the portfolio $\hat{\cP}$ to include an additional 20 parameter settings. We then train 20 regression forests using freshly drawn training sets of size 100 and 1000. This plot illustrates the fact that as we increase the portfolio size, overfitting causes test performance to worsen.

\paragraph{Discussion.} Focusing first on test performance using the largest training set size $N = 2\cdot 10^5$, we see that test performance continues to improve as we increase the portfolio size, though training and test performance steadily diverge. This illustrates the tradeoff we investigated from a theoretical perspective in this paper: as we increase the portfolio size, we can hope to include a well-suited parameter setting for every instance, but the generalization error will worsen. Figure~\ref{fig:big} shows that for a given training set size, there is a portfolio size after which test performance actually starts to get strictly worse, as our theory predicts. In other words, we observe overfitting: the learned algorithm selector has strong average performance over the training set but poor test performance.

\section{Conclusions}
We provided guarantees for learning a portfolio of parameter settings in conjunction with an algorithm selector for that portfolio. We provided a tight (up to log factors) bound on the number of samples sufficient and necessary to ensure that the selector's average performance on the training set generalizes to its expected performance on the real unknown problem instance distribution. Our guarantees apply in the widely-applicable setting where the algorithm's performance on any input problem instance is a piecewise-constant function of its parameters. Our theoretical bounds indicate that even with an extremely simple algorithm selector, we cannot hope to avoid overfitting in the worst-case if the portfolio is large. Thus, there is a tradeoff when increasing the portfolio size, since a large portfolio allows for the possibility of including a strong parameter setting for every instance, but this potential for performance improvement is overshadowed by a worsening propensity towards overfitting. We concluded with experiments illustrating this tradeoff in the context of integer programming.
A direction for future research is to understand how the diversity of a portfolio impacts its generalization error, since algorithm portfolios are often expressly designed to be diverse.

\subsection*{Acknowledgments}
This material is based on work supported by the National Science Foundation under grants CCF-1535967, CCF-1733556, CCF-1910321, IIS-1617590, IIS-1618714, IIS-1718457, IIS-1901403, and SES-1919453; the ARO under awards W911NF1710082 and W911NF2010081; the Defense Advanced Research Projects Agency under cooperative agreement HR00112020003; an AWS Machine Learning Research Award; an Amazon Research Award; a Bloomberg Research Grant; a Microsoft Research Faculty Fellowship; an IBM PhD fellowship; and a fellowship from Carnegie Mellon University’s Center for Machine Learning and Health.

\bibliographystyle{plainnat}
\bibliography{../dairefs}
\appendix
\section{Sample complexity proofs from Section~\ref{sec:sample}}\label{app:sample}
\nat*

\begin{proof}
Let $z_1, \dots, z_N \in \cZ$ be a set of problem instances that is shattered by $\cU_{\cF}$, as witnessed by the points $t_1, \dots, t_N \in \R$.
By definition, this means that \begin{align}2^N = \left|\left\{\begin{pmatrix}\textbf{1}_{\left\{u_{f(z_1)} (z_1)\leq t_1\right\}}\\
				\vdots\\
				\textbf{1}_{\left\{u_{f(z_N)}(z_N) \leq t_N\right\}}\end{pmatrix} : f \in \cF\right\}\right|\leq \left|\left\{\begin{pmatrix}u_{f(z_1)} (z_1)\\
				\vdots\\
				u_{f(z_N)}(z_N) \end{pmatrix} : f \in \cF\right\}\right|.\label{eq:shatter}\end{align}
Since each dual function $u_{z_i}^*$ is piecewise-constant with at most $t$ pieces, we know there are $M \leq Nt$ intervals $I_1, \dots, I_M$ partitioning $\R$ where for any interval $I_j$ and any problem instance $z_i$, $u_{z_i}^*(\rho)$ is constant across all $\rho \in I_j$.
We assume that the intervals are ordered so that if $j < j'$, then the points in $I_j$ are smaller than the points in $I_{j'}$

Let $J= \left(j_1, \dots, j_{\bar{\kappa}}\right) \in [M]^{\bar{\kappa}}$ be a vector of $\bar{\kappa} \leq \kappa$ interval indices with $j_1 \leq j_2 \leq \cdots \leq j_{\bar{\kappa}}$. Let $\cF_{J} \subseteq \cF$ be the set of functions $f \in \cF$ with the following property: letting $\rho_1 < \rho_2 < \cdots < \rho_{\bar{\kappa}}$ be the parameter settings $f$ maps to (i.e., $\{f(z) : z \in \cZ\} = \left\{\rho_1, \dots \rho_{\bar{\kappa}}\right\}$), we have that the $i^{th}$ parameter setting is in the $i^{th}$ interval: $\rho_1 \in I_{j_1}, \dots, \rho_{\bar{\kappa}} \in I_{j_{\bar{\kappa}}}$.
Since $I_1, \dots, I_M$ partition $\R$ and since each function $f \in \cF$ maps to at most $\kappa$ parameter settings, $\cF= \cup_{J} \cF_J$. Together with Equation~\eqref{eq:shatter}, this means that \begin{equation}2^N \leq \sum_{\bar{\kappa} = 1}^{\kappa}\sum_{J \in [M]^{\bar{\kappa}}} \left|\left\{\begin{pmatrix}u_{f(z_1)} (z_1)\\
				\vdots\\
				u_{f(z_N)}(z_N) \end{pmatrix} : f \in \cF_{J}\right\}\right|.\label{eq:union}\end{equation}

Fix a particular set $J= \left(j_1, \dots, j_{\bar{\kappa}}\right) \in [M]^{\bar{\kappa}}$ as defined above.
 For each algorithm selector $f \in \cF_J$, let $f_0: \cZ \to J$ be a function that indicates which of the $\bar{\kappa}$ intervals $I_{j_1}, \dots, I_{j_{\bar{\kappa}}}$ the parameter setting $f(z)$ falls in. In other words, $f_0(z) = j$ if and only if $f(z) \in I_j$.
Recall that for any $i \in [N]$ and $j \in J$,  $u_{f(z_i)}(z_i)$ is constant across all $f \in \cF_{J}$ with $f(z_i) \in I_j$. Therefore, even if we only know which of the $\bar{\kappa}$ intervals $f\left(z_i\right)$ falls in and not the function $f$ itself, we can correctly infer the value $u_{f\left(z_i\right)}\left(z_i\right)$. Said another way, if we only know the value $f_0\left(z_i\right) \in [\bar{\kappa}]$, we can infer the value $u_{f\left(z_i\right)}\left(z_i\right)$. Aggregating this logic across all $N$ problem instances, given a vector $\left(f_0\left(z_1\right), \dots, f_0(z_N)\right)$ we can directly infer the vector $\left(u_{f(z_1)} (z_1), \dots, u_{f(z_N)} (z_N)\right)$. This implies that \begin{equation}\left|\left\{\begin{pmatrix}u_{f(z_1)} (z_1)\\
	\vdots\\
	u_{f(z_N)}(z_N) \end{pmatrix} : f \in \cF_J\right\}\right|
\leq \left|\left\{\begin{pmatrix}f_0(z_1)\\
	\vdots\\
	f_0(z_N) \end{pmatrix} : f \in \cF_J\right\}\right|.\label{eq:uf_f_0}\end{equation}

Next, we use a similar logic to show that
 \begin{equation}\left|\left\{\begin{pmatrix}f_0(z_1)\\
				\vdots\\
				f_0(z_N) \end{pmatrix} : f \in \cF_J\right\}\right|\\
			\leq \left|\left\{\begin{pmatrix}\bar{f}(z_1)\\
				\vdots\\
				\bar{f}(z_N) \end{pmatrix} : f \in \cF_J\right\}\right|.\label{eq:f_0_bar}\end{equation}
			To see why, suppose we only know the value $\bar{f}(z_i)$ and not the function $f$ itself. For ease of notation, say $\ell = \bar{f}(z_i)$. By definition of $\bar{f}$, we know that $f(z_i)$ is the $\ell^{th}$-smallest parameter setting that the function $f$ maps to. By definition of the function $f_0$, this implies that $f_0\left(z_i\right) = j_{\ell}$. Therefore, if we only know the value $\bar{f}(z_i)$ and not the function $f$ itself, we can correctly infer the value $f_0\left(z_i\right)$. Again, aggregating this logic across all $N$ problem instances, given a vector $\left(\bar{f}\left(z_1\right), \dots, \bar{f}(z_N)\right)$ we can directly infer the vector $\left(f_0\left(z_1\right), \dots, f_0(z_N)\right)$. This implies that Equation~\eqref{eq:f_0_bar} holds.
			
Combining Equations~\ref{eq:uf_f_0} and \eqref{eq:f_0_bar} with Natarajan's lemma~\citep{Natarajan89:Learning}, we have that \[\left|\left\{\begin{pmatrix}u_{f(z_1)} (z_1)\\
			\vdots\\
			u_{f(z_N)}(z_N) \end{pmatrix} : f \in \cF_J\right\}\right|\leq N^{\bar{d}}\bar{\kappa}^{2\bar{d}}.\]
Combining this fact, the fact that $M \leq Nt$, and Equation~\eqref{eq:union}, we have that $2^N \leq \kappa(Nt)^{\kappa} N^{\bar{d}}\kappa^{2\bar{d}}$, which implies that $N = O\left(\left(\kappa + \bar{d}\right)\log \left(\kappa + \bar{d}\right) + \kappa\log t\right)$.
\end{proof}

\lb*
\begin{proof}
	Let $\cZ = (0,1]$. For each parameter setting $\rho \in \R$, define $u_{\rho}(z) = \textbf{1}_{\{z \leq \rho\}}$. As claimed, each dual function $u_z^* : \R \to \R$ is piecewise-constant with 1 discontinuity. In this case, the function in $\cU_{\cF}$ map $\cZ$ to $\{0,1\}$.
	In the special case where the range of the function class is $\{0,1\}$, pseudo-dimension is typically referred to as \emph{VC dimension}, which we denote as $\VC\left(\cU_{\cF}\right) .$

	Let $\kappa, \bar{d} \geq 2$ be two arbitrary integers. We split this proof into two cases: $\bar{d} \geq \kappa$ and $\kappa > \bar{d}$. In both cases, we exhibit a class of  selectors $\cF$ that satisfies the properties in the theorem statement and we prove that $\VC\left(\cU_{\cF}\right) \geq \max\left\{\kappa, \bar{d}\right\} = \Omega\left(\kappa + \bar{d}\right)$.
	
	\begin{claim}\label{claim:d}
		Suppose $\bar{d} \geq \kappa$. There exists a class of  selectors $\cF$ that satisfies the properties in the theorem statement and $\VC\left(\cU_{\cF}\right) = \bar{d}$.
	\end{claim}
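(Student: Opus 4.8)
The plan is to exhibit an explicit family $\cF$ that uses only two distinct parameter values, yet is rich enough that $\cU_{\cF}$ shatters $\bar{d}$ instances, while being small enough as a set that the Natarajan dimension of $\bar{\cF}$ is forced down to $\bar{d}$. Concretely, I would fix $\bar{d}$ instances $z_1 < z_2 < \cdots < z_{\bar{d}}$ in $\cZ = (0,1]$ (say $z_i = i/(\bar{d}+1)$) together with two parameter values $a < z_1$ and $c \geq z_{\bar{d}}$ (e.g.\ $a = z_1/2$ and $c = 1$). For each labeling $b \in \{0,1\}^{\bar{d}}$ I would define a selector $f_b$ that sends $z_i$ to $c$ when $b_i = 1$ and to $a$ when $b_i = 0$, and sends every other point of $\cZ$ to $a$; then $\cF = \left\{f_b : b \in \{0,1\}^{\bar{d}}\right\}$. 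Since $u_{\rho}(z) = \textbf{1}_{\{z \leq \rho\}}$, the label produced at $z_i$ is $u_{f_b(z_i)}(z_i) = \textbf{1}_{\{z_i \leq f_b(z_i)\}}$, which equals $1$ exactly when $b_i = 1$, because $c \geq z_{\bar{d}} \geq z_i$ forces label $1$ and $a < z_1 \leq z_i$ forces label $0$.

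With this construction the structural requirements are immediate: each $f_b$ maps to at most the two values $a, c$, so it maps to $\leq 2 \leq \kappa$ parameter settings (using $\kappa \geq 2$), and the dual functions $u_z^*(\rho) = \textbf{1}_{\{z \leq \rho\}}$ are piecewise constant with a single discontinuity, as already noted. For the pseudo-dimension lower bound I would take witnesses $w_1 = \cdots = w_{\bar{d}} = \frac{1}{2}$: since each label lies in $\{0,1\}$, the condition $u_{f_b(z_i)}(z_i) \leq w_i$ is equivalent to the label being $0$, and as $b$ ranges over $\{0,1\}^{\bar{d}}$ every labeling of $z_1, \dots, z_{\bar{d}}$ is realized, so $\cU_{\cF}$ shatters these $\bar{d}$ instances and $\VC(\cU_{\cF}) \geq \bar{d}$. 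The matching upper bound $\VC(\cU_{\cF}) \leq \bar{d}$ follows from cardinality, since $|\cU_{\cF}| \leq |\cF| = 2^{\bar{d}}$ and a class of at most $2^{\bar{d}}$ functions cannot shatter more than $\bar{d}$ points; hence $\VC(\cU_{\cF}) = \bar{d}$.

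The same cardinality idea controls the Natarajan dimension: because $|\bar{\cF}| \leq |\cF| = 2^{\bar{d}}$, and multi-class shattering of $n$ instances requires $2^n$ distinct functions in $\bar{\cF}$ (one realizing each subset $C$, and these differ on the shattered points since the two labels $y_i \neq y_i'$ disagree there), we get $2^n \leq 2^{\bar{d}}$, so the Natarajan dimension of $\bar{\cF}$ is at most $\bar{d}$, as the theorem requires. The main obstacle I anticipate is making the two opposing pulls compatible: $\cF$ must be expressive enough for $\cU_{\cF}$ to realize all $2^{\bar{d}}$ labelings of the chosen instances, yet $\bar{\cF}$ must stay small enough to cap the Natarajan dimension at $\bar{d}$. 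The resolution is that exactly $2^{\bar{d}}$ selectors, each using the \emph{same} two parameter values, achieve both simultaneously. Conceptually, this case isolates the selector-complexity source of overfitting (the $\bar{d}$ term), in contrast to the $\kappa > \bar{d}$ case sketched in the main body, where the Natarajan dimension is $0$ and the shattering is driven purely by the portfolio size $\kappa$.
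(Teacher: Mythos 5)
Your proof is correct and takes essentially the same approach as the paper: both construct a selector class using only two parameter values (the paper uses $\{0,1\}$ so that $u_{f(z)}(z)=f(z)$ identically on $\cZ=(0,1]$; you use $a<z_1$ and $c\geq z_{\bar d}$) so that the induced utility class reproduces the selectors' binary behavior and has VC dimension $\bar d$, while the Natarajan dimension of $\bar\cF$ is capped at $\bar d$. The only cosmetic difference is that the paper takes an arbitrary binary class of VC dimension $\bar d$ and uses the identity $\VC(\cU_\cF)=\VC(\cF)$, whereas you instantiate the full cube on $\bar d$ points and bound both dimensions by cardinality; both arguments are sound.
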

	\begin{proof}[Proof of Claim~\ref{claim:d}]
		Let $\cF \subseteq \{0,1\}^{\cZ}$ be any set of binary functions with VC dimension $\bar{d}$. As required, each  selector $f \in \cF$ maps to at most $\kappa$ parameter settings $(|\{f(z) : z \in \cZ\}| \leq 2 \leq \kappa)$. Moreover, $\bar{\cF} = \cF$, so the Natarajan dimension of $\bar{\cF}$ equals the VC dimension of $\cF$, which is $\bar{d}$.
		
		For any instance $z \in \cZ$ and function $f \in \cF$, \[u_{f(z)}(z) = \begin{cases} 1 &\text{if } z \leq f(z)\\
			0 &\text{if } z > f(z).\end{cases}\] Since $z \in (0,1]$ and $f(z) \in \{0,1\}$, this implies that $u_{f(z)}(z) = f(z)$. Therefore, $\VC\left(\cU_{\cF}\right) = \VC(\cF) = \bar{d}$.
	\end{proof}
	
	\begin{claim}\label{claim:kappa}
		Suppose $\kappa > \bar{d}$. There exists a class of  selectors $\cF$ that satisfies the properties in the theorem statement and $\VC\left(\cU_{\cF}\right) \geq \kappa$.
	\end{claim}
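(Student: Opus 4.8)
The plan is to construct $\cF$ directly as suggested in the proof sketch and then verify the four required properties, the crux being that the multi-class projection $\bar{\cF}$ collapses to a single function. First I would partition $\cZ = (0,1]$ into the $\kappa$ intervals $Z_i = \left(\frac{i-1}{\kappa}, \frac{i}{\kappa}\right]$, and for each $C \subseteq [\kappa]$ define the selector $f_C$ that maps every $z \in Z_i$ to $\frac{i}{\kappa}$ if $i \in C$ and to $\frac{i}{\kappa} - \frac{1}{2\kappa}$ if $i \notin C$. Setting $\cF = \left\{f_C : C \subseteq [\kappa]\right\}$, property (1) is immediate since $f_C$ takes a single value on each of the $\kappa$ intervals, so its image has at most $\kappa$ elements; property (2) holds because each dual function $u_z^*(\rho) = \textbf{1}_{\{z \leq \rho\}}$ is a single step with one discontinuity at $\rho = z$.

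The key step is bounding the Natarajan dimension of $\bar{\cF}$. Here I would rescale the image points: the value $f_C$ assigns to interval $i$ is either $\frac{2i}{2\kappa}$ (when $i \in C$) or $\frac{2i-1}{2\kappa}$ (when $i \notin C$). For every $i$, both candidate values lie strictly between the candidate values of interval $i-1$ and those of interval $i+1$ (the largest value interval $i$ can take, $\frac{2i}{2\kappa}$, is below the smallest value interval $i+1$ can take, $\frac{2i+1}{2\kappa}$), so the relative order of the $\kappa$ image points always matches the order of the intervals. Hence the $i$-th smallest parameter in the image of $f_C$ is always the one assigned to $Z_i$, which gives $\bar{f}_C(z) = i$ for all $z \in Z_i$, independent of $C$. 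Thus every $f_C$ induces the same multi-class function, so $\bar{\cF}$ is a singleton, which cannot multi-class shatter even one instance (shattering requires two functions realizing distinct labels $y_i \neq y_i'$); its Natarajan dimension is $0 \leq \bar{d}$, establishing property (3).

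Finally I would exhibit the shattered set $\sample = \left\{\frac{1}{\kappa}, \dots, \frac{\kappa-1}{\kappa}, 1\right\}$ of right endpoints. Because $z_i = \frac{i}{\kappa} \in Z_i$, we have $u_{f_C(z_i)}(z_i) = \textbf{1}_{\{z_i \leq f_C(z_i)\}}$, which equals $1$ exactly when $f_C(z_i) = \frac{i}{\kappa}$ (i.e.\ $i \in C$) and $0$ when $f_C(z_i) = \frac{i}{\kappa} - \frac{1}{2\kappa}$ (i.e.\ $i \notin C$). Therefore the label pattern of $f_C$ on $\sample$ is exactly the indicator vector of $C$, and ranging over all $C \subseteq [\kappa]$ realizes all $2^{\kappa}$ labelings (with witnesses $w_i = \frac{1}{2}$), so $\cU_{\cF}$ shatters $\sample$ and $\VC(\cU_{\cF}) \geq \kappa = \Omega(\kappa + \bar{d})$ since $\kappa > \bar{d}$. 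The hard part will be the Natarajan-dimension computation: the offset $\frac{1}{2\kappa}$ must be small enough that lowering $f_C$ on interval $i$ never reorders its image point past a neighboring interval's, which is precisely what keeps $\bar{\cF}$ trivial while still allowing the threshold at each right endpoint to be flipped independently.
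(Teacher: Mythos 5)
Your construction is exactly the paper's: the same partition of $(0,1]$ into $\kappa$ intervals, the same selectors $f_C$ mapping $Z_i$ to $\frac{i}{\kappa}$ or $\frac{i}{\kappa}-\frac{1}{2\kappa}$, the same observation that $\bar{\cF}$ collapses to a single function (hence Natarajan dimension $0$), and the same shattered set of right endpoints. The proof is correct and follows the paper's argument essentially verbatim, with the minor bonus that you explicitly justify why the image points' ordering matches the intervals' ordering, a step the paper leaves implicit.
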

	\begin{proof}[Proof of Claim~\ref{claim:kappa}]
		We begin by partitioning $\cZ = (0,1]$ into $\kappa$ intervals $Z_1, \dots, Z_{\kappa}$, where $Z_i = \left(\frac{i-1}{\kappa}, \frac{i}{\kappa}\right]$.
		For each set $T \subseteq [\kappa]$, define an  selector $f_{T} : \cZ \to \R$ as follows. For any $z \in \cZ = (0,1]$, let $i \in [\kappa]$ be the index of the interval $z$ lies in, i.e., $z \in Z_i$. We define \[f_{T}(z)= \begin{cases} \frac{i}{\kappa} &\text{if } i \in T\\
			\frac{i}{\kappa} - \frac{1}{2\kappa}&\text{if } i \not\in T.\end{cases}\]
		Let $\cF= \left\{f_{T} : T \subseteq [\kappa]\right\}$. For every function $f \in \cF$, $\bar{f}(z)$ equals the index $i \in [\kappa]$ such that $z \in Z_i$. Therefore, $\left|\bar{\cF}\right| = 1$, so the Natarajan dimension of $\bar{\cF}$ is $0 < \bar{d}$.
		
		Define $\sample = \left\{\frac{1}{\kappa}, \frac{2}{\kappa}, \dots, \frac{\kappa - 1}{\kappa}, 1\right\} \subset \cZ$. We prove that $\sample$ is shattered by $\cU_{\cF}$. Let $T \subseteq [\kappa]$ be an arbitrary subset. If $i \in T$, then $f_T\left(\frac{i}{\kappa}\right) = \frac{i}{\kappa}$, so \[u_{f_T\left(\frac{i}{\kappa}\right) }\left(\frac{i}{\kappa}\right)  = u_{\frac{i}{\kappa}}\left(\frac{i}{\kappa}\right) = \textbf{1}_{\left\{\frac{i}{\kappa}\leq\frac{i}{\kappa}\right\}} = 1.\] If $i \not\in T$, then $f_T\left(\frac{i}{\kappa}\right) = \frac{i}{\kappa} - \frac{1}{2\kappa}$, so \[u_{f_T\left(\frac{i}{\kappa}\right) }\left(\frac{i}{\kappa}\right)  = u_{\frac{i}{\kappa} - \frac{1}{2\kappa}}\left(\frac{i}{\kappa}\right) = \textbf{1}_{\left\{\frac{i}{\kappa}\leq\frac{i}{\kappa} - \frac{1}{2\kappa}\right\}} = 0.\] Therefore, $\sample$ is shattered by $\cU_{\cF}$, so the VC dimension of $\cU_{\cF}$ is at least $\kappa$.
	\end{proof}
	These two claims illustrate that $\VC\left(\cU_{\cF}\right) \geq \max\left\{\kappa, \bar{d}\right\} = \Omega\left(\kappa + \bar{d}\right)$.
\end{proof}

\subsection{Regression tree performance models}\label{app:rt}
\rt*

\begin{proof}
	In this proof, to simplify notation, we will denote the feature vector $\phi(z)$ as $\vec{z} \in \R^m$. For each $\kappa$-tuple of regression trees $\vec{T} = \left(T_1, \dots, T_{\kappa}\right)$, let $g_{\vec{T}} : \cZ \to [\kappa]$ be a function where $g_{\vec{T}}(z) = \argmax_{i \in [\kappa]} \left\{h_{T_i}(z)\right\}$. By definition, the set $\bar{\cF}_R$ consists of the functions $g_{\vec{T}}$ across all $\kappa$-tuples of regression trees $\vec{T}$ with at most $\ell$ leaves. We will assume, without loss of generality, that all trees are full.
	
	Let $N$ be the the Natarajan dimension of $\bar{\cF}_R$ and let $z_1, \dots, z_N \in \cZ$ be a set of $N$ problem instances that are multi-class shattered by $\bar{\cF}_R$. 
	This implies that \begin{equation}2^N \leq \left|\left\{\begin{pmatrix}g_{\vec{T}}(z_1)\\
		\vdots\\
		g_{\vec{T}}(z_N)\end{pmatrix} : \vec{T} \text{ is a }\kappa\text{-tuple of regression trees}\right\}\right|.\label{eq:rt_partition}\end{equation}
	In this proof, we show that the right-hand-side of this inequality is bounded by $m^{\kappa(\ell - 1)}(N\ell)^{\ell\kappa}(\kappa\ell)^{\kappa\ell}$, which implies that $N = O(\ell\kappa \log (\ell \kappa m))$.
	
	To this end, we begin by focusing on a single regression tree $T$.
	 We analyze the number of ways the tree can partition the instances $\vec{z}_1, \dots, \vec{z}_N$ as we vary the parameters of $T$.
	 Each internal node of $T$ performances an inequality test on some feature of the input, so it is defined by a feature index $i \in [m]$ and a threshold $\theta \in \R$. Since there are $\ell$ leaves, there are $\ell-1$ internal nodes.
	First, we fix the indices of all internal nodes, which leaves $\ell - 1$ real-valued thresholds to analyze.
	At a particular internal node $\nu$, let $j$ be the index of the feature on which the node performs an inequality test and let $\theta_{\nu}$ be the threshold (where the index $j$ is fixed by the threshold $\theta_{\nu}$ is not fixed). Whether or not the instance $\vec{z}_i$ would be sorted into the left or right child of the node depends on whether or not \begin{equation}z_i[j] \leq \theta_{\nu}\label{eq:regression_hyperplane}\end{equation} (where $z_i[j]$ is the $j^{th}$ coordinate of the vector $\vec{z}_i$).
 	For each problem instance $\vec{z}_i$, there are therefore $\ell - 1$ hyperplanes splitting the set of thresholds $\R^{\ell - 1}$ into regions where if we use thresholds from within any one region, the path that the instance $\vec{z}_i$ takes through the tree (from root to leaf) is constant. The same holds for all $N$ problem instances, leading to a total of $N(\ell-1)$ hyperplanes in $\R^{\ell - 1}$. In total, these hyperplanes split $\R^{\ell - 1}$ into at most $(N\ell)^{\ell}$ regions where if we use the thresholds from within any one region, the path that each of the $N$ problem instances takes through the tree is constant~\citep{Buck43:Partition}. Since this is true no matter how we fix the feature indices of each interval node, tuning all parameters of the tree $T$ (both the feature indices and the thresholds) can induce at most $m^{\ell - 1}(N\ell)^{\ell}$ different partitions of the $N$ problem instances.
 	
 	Said another way, for any tree $T$ and instance $z \in \cZ$, let $\lambda_T(z) \in [\ell]$ be the index of the leaf that the instance $z$ is mapped to as we apply the inequality tests defined by the internal nodes of $T$. As we vary the tree $T$, the vector $\left(\lambda_T\left(z_1\right), \dots, \lambda_T\left(z_N\right)\right) \in [\ell]^N$ will take on at most $m^{\ell - 1}(N\ell)^{\ell}$ different values.
 	
We now aggregate this reasoning across all $\kappa$ regression trees. For any instance $z \in \cZ$ and any $\kappa$-tuple of regression trees $\vec{T} = \left(T_1, \dots, T_{\kappa}\right)$, let $\Lambda_{\vec{T}}(z) \in [\ell]^{\kappa}$ be a vector where for each $j \in [\kappa]$, the $j^{th}$ component of $\Lambda_{\vec{T}}(z)$ is the index of the leaf that the instance $z$ is mapped to as we apply the inequality tests defined by the tree $T_j$. In other words, $\Lambda_{\vec{T}}(z) = \left(\lambda_{T_1}(z), \dots, \lambda_{T_{\kappa}}(z)\right)$. As we vary $\vec{T}$, the matrix \[\begin{pmatrix}
\Lambda_{\vec{T}}\left(z_1\right) & \hdots &
\Lambda_{\vec{T}}\left(z_N\right)
\end{pmatrix} = \begin{pmatrix} \lambda_{T_1}\left(z_1\right) &\cdots &\lambda_{T_{1}}\left(z_N\right) \\
	\vdots & \ddots & \vdots\\
	\lambda_{T_{\kappa}}\left(z_1\right) &\cdots &\lambda_{T_{\kappa}}\left(z_N\right) 
	\end{pmatrix}\] will take on at most $m^{(\ell - 1)\kappa}(N\ell)^{\ell\kappa}$ different values. After all, the first row of the matrix can take on at most $m^{\ell - 1}(N\ell)^{\ell}$ different values as we vary the tree $T_1$, the second row can take on at most $m^{\ell - 1}(N\ell)^{\ell}$ different values as we vary $T_2$, and so on.
 	
Now, consider the set of all $\kappa$-tuples of regression trees $\vec{T}$ where the matrix $\left(\Lambda_{\vec{T}}\left(z_1\right), \dots,
\Lambda_{\vec{T}}\left(z_N\right)\right)$ is constant. Across all such $\vec{T}= \left(T_1, \dots, T_{\kappa}\right)$, we know exactly which leaf each instance $z_i$ maps to for all $\kappa$ trees.
For each each instance $z_i$, the tree with the largest label---or in other words, the value of the multi-class function $g_{\vec{T}}(z_i)$---only depends on the relative order of the leaves' predictions. Since there is a total of $\kappa\ell$ leaves, there are at most $(\kappa\ell)^{\kappa\ell}$ such orderings. Combining this bound with the bound from the previous paragraph, we have that \[\left|\left\{\begin{pmatrix}g_{\vec{T}}(z_1)\\
	\vdots\\
	g_{\vec{T}}(z_N)\end{pmatrix} : \vec{T} \text{ is a }\kappa\text{-tuple of regression trees}\right\}\right| \leq m^{(\ell - 1)\kappa}(N\ell)^{\ell\kappa}(\kappa\ell)^{\kappa\ell}.\]
From Equation~\eqref{eq:rt_partition}, we have that $2^N \leq m^{(\ell - 1)\kappa}(N\ell)^{\ell\kappa}(\kappa\ell)^{\kappa\ell}$, so $N = O(\ell\kappa \log (\ell \kappa m))$.
\end{proof}

\subsection{Clustering-based algorithm selectors}\label{app:cluster}
\cluster*
\begin{proof}In this proof, to simplify notation, we will denote the feature vector $\phi(z)$ as $\vec{z} \in \R^m$.
	For each matrix $X\in \R^{m \times \kappa}$, let $g_X : \cZ \to [\kappa]$ be a function where \[g_X(z) = \argmin_{i \in [\kappa]}\left\{\norm{\vec{x}_i - \vec{z}}^p_p\right\}.\]
	By definition, $\bar{\cF}_C= \left\{g_X : X \in \R^{m \times \kappa}\right\}$.
	Let $N$ be the Natarajan dimension of $\bar{\cF}_C$ and let $z_1, \dots, z_N \in \cZ$ be a set of $N$ problem instances that are multi-class shattered by $\bar{\cF}_C$. 
	This implies that \begin{equation}2^N \leq \left|\left\{\begin{pmatrix}g_X(z_1)\\
		\vdots\\
		g_X(z_N)\end{pmatrix} : X \in \R^{m \times \kappa}\right\}\right|.\label{eq:partition_clus}\end{equation}
	In this proof, we analyze the partition of the parameter space $\R^{m \times \kappa}$ into regions where in any one region $R \subseteq \R^{m \times \kappa}$, across all matrices $X \in R$, the vector $\left(g_X(z_1), \dots,
	g_X(z_N)\right)$ is constant.
	
	We begin by subdividing $\R^{m \times \kappa}$ into regions $P_1, \dots, P_T\subseteq \R^{m \times \kappa}$ where in any one region $P$, across all $X \in P$, either the $\ell^{th}$ component of $\vec{z}_q$ is smaller than the $\ell^{th}$ component of $\vec{x}_j$, i.e. $z_q[\ell] \leq x_j[\ell]$, or vice versa (but not both) for all $q \in [N],$ $j \in [\kappa]$, and $\ell \in [m]$. This is partition is defined by $N \kappa m$ hyperplanes in $\R^{m\kappa}$, so there are $T \leq (N\kappa m + 1)^{m \kappa}$ such regions~\citep{Buck43:Partition}.
	
	Next, fix one of these $T$ regions $P \subseteq \R^{m \times \kappa}$. Without loss of generality, assume that the $\ell^{th}$ component of $\vec{z}_q$ is smaller than the $\ell^{th}$ component of $\vec{x}_j$, i.e. $z_q[\ell] \leq x_j[\ell]$ for all $q \in [N],$ $j \in [\kappa]$, and $\ell \in [m]$.
	For any two labels $i,j \in [\kappa]$ and any $q \in [N]$, whether or not \begin{equation}\norm{\vec{x}_i - \vec{z}_q}^p_p \geq \norm{\vec{x}_j - \vec{z}_q}^p_p\label{eq:norm}\end{equation} directly depends on the sign of the polynomial \[h_{q,i,j}(X):= \sum_{\ell = 1}^m \left(x_i[\ell] - z_q[\ell]\right)^p - \left(x_j[\ell] - z_q[\ell]\right)^p.\]
	We know there are at most $\left(N\kappa^2p\right)^{m\kappa}$ regions partitioning $P$ so that in any one region $R$, across all $X \in R$, either $h_{q,i,j}(X) \leq 0$ or $h_{q,i,j}(X) > 0$ (but not both) for all $q \in [N]$ and $i,j \in [\kappa]$~\citep{Anthony09:Neural}.
	For any such region $R$, across all $X \in R$, all pairwise comparisons as in Equation~\eqref{eq:norm} are fixed, so the vector $\left(g_X(z_1), \dots,
	g_X(z_N)\right)$ is constant.
	In total, there are at most $(N\kappa m + 1)^{m \kappa}\left(N\kappa^2p\right)^{m\kappa} \leq (2N^2\kappa^3 p \ell)^{m \kappa}$ regions, which implies that \[\left|\left\{\begin{pmatrix}g_X(z_1)\\
		\vdots\\
		g_X(z_N)\end{pmatrix} : X \in \R^{m \times \kappa}\right\}\right| \leq (2N^2\kappa^3 p \ell)^{m \kappa}.\]
Combining this inequality with Equation~\eqref{eq:partition_clus}, we have that $2^N \leq (2N^2\kappa^3 p \ell)^{m \kappa}$, so $N = O(mk \log (m\kappa p))$.
\end{proof}

\section{Proof of Theorem~\ref{thm:overall}}\label{app:procedure}

\overallBound*
\begin{proof}
	First, let \[\cP^* = \argmax_{\cP \subset \R : |\cP| \leq \kappa}\E_{z \sim \dist}\left[\max_{\rho \in \cP} u_{\rho}(z) \right].\]
	A Hoeffding bound implies that with probability $1-\delta$, \[\E_{z \sim \dist}\left[\max_{\rho \in \cP^*} u_{\rho}(z) \right] \leq \frac{1}{N}\sum_{z \in \sample} \max_{\rho \in \cP^*} u_{\rho}(z) + \tilde O\left(H\sqrt{\frac{1}{N}}\right).\]
	Combining this inequality with Definition~\ref{def:opt}, we have that \begin{align*}\E_{z \sim \dist}\left[\max_{\rho \in \cP^*} u_{\rho}(z) \right] &\leq \frac{1}{\alpha}\left(\frac{1}{ N}\sum_{z \in \sample} \max_{\rho \in \hat{\cP}} u_{\rho}(z) + \beta\right)+  O\left(H\sqrt{\frac{1}{N}\log \frac{1}{\delta}}\right)\\
	&\leq \frac{1}{\alpha}\left(\frac{1}{ N}\sum_{z \in \sample} u_{\hat{f}(z)}(z) + \epsilon + \beta\right)+ O\left(H\sqrt{\frac{1}{N} \log \frac{1}{\delta}}\right).\end{align*}
	From Theorem~\ref{thm:nat}, we know that with probability $1-\delta$, \[\E_{z \sim \dist}\left[\max_{\rho \in \cP^*} u_{\rho}(z) \right] \leq \frac{1}{\alpha}\left(\E_{z \sim \dist} \left[u_{\hat{f}(z)}(z)\right] + \tilde O \left(H\sqrt{\frac{\bar{d} + \kappa}{N}}\right) + \epsilon + \beta\right).\] Therefore, the theorem statement holds.
\end{proof}

\section{Connection to submodularity}\label{app:submodular}
Since each dual function $u_z^*(\rho)$ is piecewise-constant with at most $t$ pieces, on any training set $\sample = \left\{z_1, \dots, z_N\right\} \subseteq \cZ$, there are at most $Nt$ parameter settings leading to different algorithmic performance over this training set. In other words, \[\left|\left\{\begin{pmatrix} u_{z_1}^*(\rho)\\
		\vdots\\
		u_{z_N}^*(\rho)\end{pmatrix} : \rho \in \R\right\}\right| \leq Nt.\]
Let $\bar{\cP} \subseteq \R$ be a set of at most $Nt$ parameters such that \[\left\{\begin{pmatrix} u_{z_1}^*(\rho)\\
		\vdots\\
		u_{z_N}^*(\rho)\end{pmatrix} : \rho \in \R\right\} = \left\{\begin{pmatrix} u_{z_1}^*(\rho)\\
		\vdots\\
		u_{z_N}^*(\rho)\end{pmatrix} : \rho \in \bar{\cP}\right\}.\]
For any $T \subseteq \bar{\cP}$, let \[U(T) = \sum_{i = 1}^N \max_{\rho \in T} u_{\rho}(z_i).\]

\begin{theorem}\label{thm:submodular}
	The function $U$ is monotone and submodular.
\end{theorem}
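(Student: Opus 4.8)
The plan is to prove the two properties separately. The function $U(T) = \sum_{i=1}^N \max_{\rho \in T} u_{\rho}(z_i)$ is a sum over the $N$ training instances, so it suffices to show that each summand $U_i(T) := \max_{\rho \in T} u_{\rho}(z_i)$ is monotone and submodular; a nonnegative sum of monotone submodular functions is again monotone submodular. This reduction to a single instance is the cleanest way to organize the argument and avoids carrying the summation through every step.

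For \emph{monotonicity}, I would observe that for any $T \subseteq T' \subseteq \bar{\cP}$ we have $\max_{\rho \in T} u_{\rho}(z_i) \leq \max_{\rho \in T'} u_{\rho}(z_i)$, since the maximum over a larger set can only increase (the maximum over $T$ is achieved by some $\rho \in T \subseteq T'$, which is a feasible candidate for the maximum over $T'$). Summing over $i$ gives $U(T) \leq U(T')$. Since performance values $u_{\rho}(z_i) \in [0,H]$ are nonnegative, $U$ is also nonnegative, which is all that monotonicity requires here.

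For \emph{submodularity}, the standard route is the diminishing-returns characterization: I would show that for all $T \subseteq T' \subseteq \bar{\cP}$ and any $\rho^* \in \bar{\cP} \setminus T'$,
\[
U_i(T \cup \{\rho^*\}) - U_i(T) \;\geq\; U_i(T' \cup \{\rho^*\}) - U_i(T').
\]
Writing $a = \max_{\rho \in T} u_{\rho}(z_i)$, $b = \max_{\rho \in T'} u_{\rho}(z_i)$, and $v = u_{\rho^*}(z_i)$, the left side equals $\max\{a,v\} - a = \max\{v - a, 0\}$ and the right side equals $\max\{b,v\} - b = \max\{v - b, 0\}$. Monotonicity gives $a \leq b$, so $v - a \geq v - b$ and hence $\max\{v-a,0\} \geq \max\{v-b,0\}$, establishing the diminishing-returns inequality pointwise. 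Summing over $i \in [N]$ yields submodularity of $U$.

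The argument is essentially elementary, so there is no serious obstacle; the only point requiring a little care is confirming that the marginal-gain (diminishing returns) condition is genuinely equivalent to submodularity for set functions, and that a sum of submodular functions is submodular—both are classical facts I would invoke rather than reprove. One stylistic choice is whether to prove submodularity via marginal gains or directly via the defining inequality $U(A) + U(B) \geq U(A \cup B) + U(A \cap B)$; I expect the marginal-gain formulation above to be the shortest path, since the per-instance max function has such a transparent $\max\{v - (\cdot), 0\}$ form for the incremental value.
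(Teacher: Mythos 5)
Your proposal is correct, and it uses the same top-level decomposition as the paper: reduce to the per-instance functions $U_i(T) = \max_{\rho \in T} u_{\rho}(z_i)$ and invoke closure of submodularity under nonnegative sums. Where you diverge is in how you verify submodularity of each $U_i$. The paper uses the ``local'' characterization $U_i(T \cup \{\rho_1\}) + U_i(T \cup \{\rho_2\}) \geq U_i(T \cup \{\rho_1, \rho_2\}) + U_i(T)$ and works through a three-way case analysis on the relative order of $u_{\rho_1}(z_i)$, $u_{\rho_2}(z_i)$, and $\max_{\rho \in T} u_{\rho}(z_i)$. You instead use the diminishing-returns characterization and the identity $\max\{a,v\} - a = \max\{v-a,0\}$, which reduces the whole verification to the monotonicity of $x \mapsto \max\{x,0\}$ applied to $v-a \geq v-b$; this eliminates the case analysis entirely and is arguably the cleaner argument. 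You also make monotonicity explicit, which the paper asserts without proof. Both routes rest on the classical equivalence of the two submodularity characterizations, which you correctly flag as something to cite rather than reprove; there is no gap.
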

\begin{proof}
For any $z_i$, let $U_i : 2^{\cP^*} \to \R$ be the function $U_i(T) = \max_{\rho \in T} u_{\rho}(z_i).$
We will prove that each function $U_i$ is submodular. The theorem then follows because the class of submodular functions is closed under non-negative linear combinations.
To this end, let $T \subseteq \cP^*$ be an arbitrary subset of $\cP^*$ and let $\rho_1, \rho_2 \in \cP^* \setminus T$ be any two parameter settings in $\cP^*$ but not in $T$. We want to prove that \begin{equation}\max_{\rho \in T \cup \{\rho_1\}} u_{\rho}(z_i) + \max_{\rho \in T \cup \{\rho_2\}} u_{\rho}(z_i) \geq \max_{\rho \in T \cup \{\rho_1, \rho_2\}} u_{\rho}(z_i) + \max_{\rho \in T} u_{\rho}(z_i).\label{eq:submodular}\end{equation}
Without loss of generality, suppose that $u_{\rho_1}(z_i) \geq u_{\rho_2}(z_i)$.
Let $\bar{\rho} \in \argmax_{\rho \in T} u_{\rho}(z_i)$.
There are three cases:
		\begin{itemize}
			\item In the first case, $u_{\bar{\rho}}(z) \geq u_{\rho_1}(z_i) \geq u_{\rho_2}(z_i)$, so \[
			\max_{\rho \in T \cup \{\rho_1\}} u_{\rho}(z_i) + \max_{\rho \in T \cup \{\rho_2\}} u_{\rho}(z_i) = 2u_{\bar{\rho}}(z_i) = \max_{\rho \in T \cup \{\rho_1, \rho_2\}} u_{\rho}(z_i) + \max_{\rho \in T} u_{\rho}(z_i),
			\] so Equation~\eqref{eq:submodular} holds.
			\item In the second case, $u_{\rho_1}(z_i) \geq u_{\bar{\rho}}(z) \geq u_{\rho_2}(z_i)$, so \[
			\max_{\rho \in T \cup \{\rho_1\}} u_{\rho}(z_i) + \max_{\rho \in T \cup \{\rho_2\}} u_{\rho}(z_i) = u_{\rho_1}(z_i)  + u_{\bar{\rho}}(z) = \max_{\rho \in T \cup \{\rho_1, \rho_2\}} u_{\rho}(z_i) + \max_{\rho \in T} u_{\rho}(z_i),
			\] so Equation~\eqref{eq:submodular} holds.
			\item In the third and final case, $u_{\rho_1}(z_i)  \geq u_{\rho_2}(z_i) \geq u_{\bar{\rho}}(z)$, so \[\max_{\rho \in T \cup \{\rho_1\}} u_{\rho}(z_i) + \max_{\rho \in T \cup \{\rho_2\}} u_{\rho}(z_i) = u_{\rho_1}(z_i)  +u_{\rho_2}(z_i)
				\geq u_{\rho_1}(z_i)  + u_{\bar{\rho}}(z)
				= \max_{\rho \in T \cup \{\rho_1, \rho_2\}} u_{\rho}(z_i) + \max_{\rho \in T} u_{\rho}(z_i),
			\] so Equation~\eqref{eq:submodular} holds.
		\end{itemize}
Therefore, the function $U$ is monotone and submodular.
\end{proof}
For any cardinality constraint $\kappa \in \N$, let $\hat{\cP} \subseteq \cP^*$ be the set of $\kappa$ parameter settings that the greedy algorithm selects to optimize the function $U$. Theorem~\ref{thm:submodular} implies that \[\sum_{i = 1}^N \max_{\rho \in \hat{\cP}} u_{\rho}(z_i) \geq \left( 1 - \frac{1}{e}\right)\max_{T \subseteq \cP : |T| \leq \kappa}\sum_{i = 1}^N \max_{\rho \in T} u_{\rho}(z_i).\]

\section{Additional details about experiments}\label{app:experiments}
\paragraph{Branch-and-bound.} We begin with a high-level overview of branch-and-cut (B\&C) and refer the reader to the textbook by~\citet{Nemhauser99:Integer}, for example, for more details. B\&C is an algorithm for solving integer programs (IPs). An IP is defined by an objective vector $\vec{c} \in \R^n$, a constraint matrix $A \in \R^{m \times n}$, a constraint vector $\vec{b} \in \R^m$, and a set of indices $I \subseteq [m]$. The goal is to solve the following optimization problem:
\begin{equation}\begin{array}{lll}
	\text{maximize} & \vec{c} \cdot \vec{x}\\
	\text{subject to} & A \vec{x} \leq \vec{b}\\
	&x[i] \in \Z &\forall i \in I.\end{array}\label{eq:IP}\end{equation}
In keeping with Section~\ref{sec:formulation}, we use the notation $z = (\vec{c}, A, \vec{b}, I)$ to denote the IP. B\&C builds a search tree to solve an input IP $z$, with $z$ stored at the root. It begins by solving the LP relaxation of the input IP $z$. We use the notation $\breve{\vec{x}}_z$ to denote the solution to this LP relaxation. B\&C then uses a \emph{variable selection policy} to choose a variable $i \in I$ and it \emph{branches} on this variable. This means that it defines a new IP $z_i^-$ which is identical to the original IP $z$ but with the additional constraint that $x[i] \leq \left \lfloor \breve{x}_z[i]\right\rfloor$. It stores the IP $z_i^-$ in the left child of the root node. Similarly, it defines another IP $z_i^+$ which is identical to the original IP $z$ but with the additional constraint that $x[i] \geq \left \lfloor \breve{x}_z[i]\right\rfloor$. It stores the IP $z_i^+$ in the right child of the root node. It then uses a \emph{node selection policy} to choose one of the two leaves and repeats this process---solving the LP relaxation of the node's IP, choosing a variable to branch on, and so on. Eventually, one of the solutions to an LP relaxation B\&C solves will in fact be the optimal solution to the original IP (Equation~\eqref{eq:IP}), and B\&C will be able to verify its optimality (this verification procedure is straightforward, but we do not go into the details here).

\paragraph{Parameterized variable selection policy.} We analyze a parameterized variable selection policy that has been studied extensively in prior research~\citep{Gauthier77:Experiments,Benichou71:Experiments,Beale79:Branch,Linderoth99:Computational,Achterberg09:SCIP, Balcan18:Learning, Balcan20:Refined}. To define this variable selection policy, we use the notation $\breve{c}_{\bar{z}} = \vec{c} \cdot \breve{\vec{x}}_{\bar{z}}$ for any IP $\bar{z}$. Given a parameter setting $\rho \in [0,1]$ and the IP $\bar{z}$ contained at the leaf of the search tree, this variable selection policy chooses to branch on the variable $i \in I$ that maximizes \[(1-\rho) \min\left\{\breve{c}_{\bar{z}} - \breve{c}_{\bar{z}_i^+}, \breve{c}_{\bar{z}} - \breve{c}_{\bar{z}_i^-}\right\} + \rho\max\left\{\breve{c}_{\bar{z}} - \breve{c}_{\bar{z}_i^+}, \breve{c}_{\bar{z}} - \breve{c}_{\bar{z}_i^-}\right\}.\] The parameter $\rho$ thus balances a pessimistic approach to branching---which always chooses the variable leading to the minimal change in the LP objective value---with an optimistic approach---which chooses the variable leading to the maximal change in the LP objective value.

\end{document}